\newtheorem{thm}{Theorem}
\newtheorem{lem}{Lemma}
\newtheorem{prop}{Proposition}
\theoremstyle{definition}
\newtheorem{defn}{Definition}
\newtheorem{rem}{Remark}
\DeclareMathOperator*{\argmin}{\arg\!\min}
\newcommand{\ba}{\bm{a}}
\newcommand{\bp}{\bm{p}}
\newcommand{\bv}{\bm{v}}
\newcommand{\bg}{\bm{g}}
\newcommand{\bM}{\bm{M}}
\newcommand{\bP}{\bm{P}}
\newcommand{\bQ}{\bm{Q}}
\newcommand{\blambda}{\pmb{\lambda}}
\newcommand{\bmu}{\pmb{\mu}}
\newcommand{\bLambda}{\pmb{\Lambda}}
\newcommand{\rB}{\textnormal{B}}
\newcommand{\rX}{\textnormal{X}}
\newcommand{\rY}{\textnormal{Y}}
\newcommand{\rZ}{\textnormal{Z}}
\newcommand{\rA}{\textnormal{A}}
\newcommand{\rS}{\textnormal{S}}
\newcommand{\diag}{\mathsf{diag}}
\newcommand{\ones}{\bm{1}}
\newcommand{\Reals}{\mathbb{R}}
\newcommand{\defined}{\triangleq}
\newcommand{\sto}{\mbox{\normalfont s.t.}}
\newcommand{\indicator}[1]{\mathbb{I}_{#1}}
\newcommand{\dif}{\textrm{d}}
\newcommand{\ExpVal}[2]{\mathbb{E}\left[ #2 \right]}
\newcommand{\EE}[1]{\ExpVal{}{#1}}
\newcommand{\EEE}[2]{\mathbb{E}_{#1}\left[ #2 \right]}
\newcommand{\indep}{\rotatebox[origin=c]{90}{$\models$}}
\renewcommand{\CheckmarkBold}{{\color{ForestGreen}\Checkmark}}
\renewcommand{\XSolidBold}{{\color{BrickRed}\XSolid}}
\title{Aleatoric and Epistemic Discrimination:\\
Fundamental Limits of Fairness Interventions}
\author{%
    Hao Wang\\
    MIT-IBM Watson AI Lab\\
    \texttt{hao@ibm.com} \\
    \And
    Luxi He\\
    Harvard College\\
    \texttt{luxihe@college.harvard.edu}\\
    \AND
    Rui Gao\\
    The University of Texas at Austin\\
    \texttt{rui.gao@mccombs.utexas.edu} \\
    \And
    Flavio P. Calmon\\
    Harvard University\\
    \texttt{flavio@seas.harvard.edu} 
}
\begin{document}

\maketitle

\begin{abstract}
Machine learning (ML) models can underperform on certain population groups due to choices made during model development and bias inherent in the data. We categorize sources of discrimination in the ML pipeline into two classes: \emph{aleatoric discrimination}, which is inherent in the data distribution, and \emph{epistemic discrimination}, which is due to decisions made during model development. We quantify aleatoric discrimination by determining the performance limits of a model under fairness constraints, assuming perfect knowledge of the data distribution. We demonstrate how to characterize aleatoric discrimination by applying Blackwell's results on comparing statistical experiments. We then quantify epistemic discrimination as the gap between a model's accuracy when fairness constraints are applied and the limit posed by aleatoric discrimination. We apply this approach to benchmark existing fairness interventions and investigate fairness risks in data with missing values. Our results indicate that state-of-the-art fairness interventions are effective at removing epistemic discrimination on standard (overused) tabular datasets. However, when data has missing values, there is still significant room for improvement in handling aleatoric discrimination.
\end{abstract}

\section{Introduction}

Algorithmic discrimination may occur in different stages of the machine learning (ML) pipeline. For example, historical biases in the data-generating process can propagate to  downstream tasks; human biases can influence a ML model through inductive bias; optimizing solely for accuracy can lead to disparate model performance across groups in the data \citep{suresh2019framework,mayson2019bias}.  The past years have seen a rapid increase in algorithmic interventions that aim to mitigate biases in ML models \citep[see e.g.,][]{zemel2013learning,feldman2015certifying,calmon2017optimized,menon2018cost,zhang2018mitigating,zafar2019fairness,friedler2019comparative,bellamy2019ai,kim2019multiaccuracy,celis2019classification,yang2020fairness,jiang2020identifying,jiang2020wasserstein,martinez2020minimax,lowy2021fermi,alghamdi2022beyond}. A recent survey \citep{hort2022bia} found \emph{nearly 400} fairness-intervention algorithms, including 123 pre-processing, 212 in-processing, and 56 post-processing algorithms introduced in the past decade.

Which sources of biases are (the hundreds of) existing fairness interventions trying to control? In order to create effective strategies for reducing algorithmic discrimination, it is critical to disentangle where biases in model performance originate. 
For instance, if a certain population group has significantly more missing features in training data, then it is more beneficial to collect data than selecting a more complex model class or training strategy. Conversely, if the model class does not accurately represent the underlying distribution of a specific population group, then collecting more data for that group will not resolve performance disparities.

We divide algorithmic discrimination\footnote{There are various measures to quantify algorithmic discrimination, and the choice should be based on the specific application of interest \citep[see][for a more detailed discussion]{blodgett2020language,varshney2021trustworthy,katzman2023taxonomizing}. In this paper, we focus on group fairness measures (see Table~\ref{tabel:FairMetric} for some examples), which are crucial in contexts like hiring and recidivism prediction.} into two categories: aleatoric and epistemic discrimination.\footnote{We borrow this notion from ML uncertainty literature \citep[see][for a survey]{hullermeier2021aleatoric} and defer a detailed comparison in Appendix~\ref{subsec::related_uncertainty}.}
Aleatoric discrimination captures inherent biases in the data distribution that can lead to  unfair decisions in  downstream tasks. Epistemic discrimination, in turn, is due to algorithmic choices made during model development and lack of knowledge about the optimal ``fair'' predictive model.

In this paper, we provide methods for measuring aleatoric and epistemic discrimination in classification tasks for group fairness metrics. Since aleatoric discrimination only depends on properties of the data distribution and the fairness measure of choice, we quantify it by asking a fundamental question: 

\begin{center}
\emph{For a given data distribution, what is the best achievable performance (e.g., accuracy)\\under a set of group fairness constraints?} 
\end{center}

We refer to the answer as the \emph{fairness Pareto frontier}. This frontier delineates the optimal performance achievable by a classifier when  unlimited data and computing power are available. For a fixed data distribution, the fairness Pareto frontier represents the ultimate, information-theoretic limit for accuracy and group fairness beyond which no model can achieve. Characterizing this limit enables us to  (i) separate sources of discrimination and create strategies to control them accordingly; (ii) evaluate the effectiveness of existing fairness interventions for reducing epistemic discrimination; and (iii) inform the development of data collection methods that promote fairness in downstream tasks.

At first, computing the fairness Pareto frontier can appear to be an intractable problem since it requires searching over all possible classifiers---even if the data distribution is known exactly. Our main technical contribution is to provide an upper bound estimate for this frontier by solving a sequence of optimization problems. The proof technique is based on Blackwell's seminal results \citep{blackwell1953equivalent}, which proposed the notion of comparisons of statistical experiments and inspired a line of works introducing alternative comparison criteria \citep[see e.g.,][]{shannon1958note,le1964sufficiency,torgersen1991comparison,cohen1998comparisons,raginsky2011shannon}. Here, we apply these results to develop an algorithm that iteratively refines the achievable fairness Pareto frontier. We also prove convergence guarantees for our algorithm and demonstrate how it can be used to benchmark existing fairness interventions.

We quantify epistemic discrimination by comparing a classifier's performance with the information-theoretic optimal given by the fairness Pareto frontier. Our experiments indicate that given sufficient data, state-of-the-art (SOTA) group fairness interventions are effective at reducing epistemic discrimination as their gap to the information-theoretic limit is small (see Figure~\ref{Fig::BayesOptimal} and \ref{Fig::Fair_Frontier}). 
Consequently, there are diminishing returns in benchmarking new fairness interventions on standard (overused) tabular datasets (e.g., UCI Adult and ProPublica COMPAS datasets).
However, existing interventions \emph{do not} eliminate aleatoric discrimination as this type of discrimination is not caused by choice of learning algorithm or model class, and is due to the data distribution. 
Factors such as data missing values can significantly contribute to aleatoric discrimination. We observe that when population groups have disparate missing patterns, aleatoric discrimination escalates, leading to a sharp decline in the effectiveness of fairness intervention algorithms (see Figure~\ref{Fig::Reduce_Aleatoric}).

\subsection*{Related Work}

There is significant work analyzing the tension between group fairness measures and model performance metrics \citep[see e.g.,][]{kleinberg2016inherent,chouldechova2017fair,corbett2017algorithmic,chen2018my,wick2019unlocking,dutta2020there,wang2021split}. 
For example, there is a growing body of work on omnipredictors \citep{gopalan2021omnipredictors,hu2023omnipredictors,globus2023multicalibrated} discussing how, and under which conditions, the fair Bayes optimal classifier can be derived using post-processing techniques from multicalibrated regressors.
While previous studies \citep{hardt2016equality,corbett2017algorithmic,menon2018cost,chzhen2019leveraging,yang2020fairness,zeng2022bayes,zeng2022fair} have investigated the fairness Pareto frontier and fair Bayes optimal classifier, our approach differs from this prior work in the following aspects: our approach is applicable to \emph{multiclass} classification problems with \emph{multiple} protected groups; it \emph{avoids disparate treatment} by not requiring the classifier to use group attributes as an input variable; and it can handle \emph{multiple} fairness constraints simultaneously and produce fairness-accuracy trade-off curves (instead of a single point). Additionally, our proof techniques based on Blackwell's results on comparing statistical experiments are unique and may be of particular interest to fair ML and information theory communities. We present a detailed comparison with this line of work in Table~\ref{tabel:comparison} of Appendix~\ref{append:future}.

We recast the fairness Pareto frontier in terms of the conditional distribution $P_{\hat{\rY}|\rY,\rS}$ of predicted outcome $\hat{\rY}$ given true label $\rY$ and group attributes $\rS$. This conditional distribution is related to confusion matrices 
conditioned on each subgroup. In this regard, our work is related to \citet{verma2018fairness,alghamdi2020model,kim2020fact,yang2020fairness,berk2021fairness}, which observed that many group fairness metrics can be written in terms of the confusion matrices for each subgroup. Among them, the closest work to ours is \citet{kim2020fact}, which optimized accuracy and fairness objectives over these confusion matrices and proposed a post-processing technique for training fair classifiers. However, they only imposed marginal sum constraints for the confusion matrices. We demonstrate that the feasible region of confusion matrices can be much smaller (see Remark~\ref{rem::ex_ind} for an example), leading to a tighter approximation of the fairness Pareto frontier.

Recently, many strategies have been proposed to reduce the tension between group fairness and model performance by investigating properties of the data distribution. For example, \citet{blum2019recovering,suresh2019framework,fogliato2020fairness,wang2020robust,mehrotra2021mitigating,fernando2021missing,wang2021analyzing,zhang2021assessing,tomasev2021fairness,jacobs2021measurement,kallus2022assessing,jeong2022fairness} studied how noisy or missing data affect fairness and model accuracy. \citet{dwork2018decoupled,ustun2019fairness,wang2021split} considered training a separate classifier for each subgroup when their data distributions are different. Another line of research introduces data pre-processing techniques that manipulate data distribution for reducing its bias \citep[e.g.,][]{calmon2017optimized,kamiran2012data}. Among all these works, the closest one to ours is \citet{chen2018my}, which decomposed group fairness measures into bias, variance, and noise (see their Theorem~1) and proposed strategies for reducing each term. Compared with \citet{chen2018my}, the main difference is that we characterize a fairness Pareto frontier that depends on fairness metrics \emph{and} a performance measure, giving a complete picture of how the data distribution influences fairness and accuracy.

\section{Preliminaries}
Next, we introduce notation, overview the key results in \citet{blackwell1953equivalent} on comparisons of experiments, and outline the fair classification setup considered in this paper.

\paragraph{Notation.} For a positive integer $n$, let $[n]\defined \{1,\cdots,n\}$. We denote all probability distributions on the set $\mathcal{X}$ by $\mathcal{P}(\mathcal{X})$. Moreover, we define the probability simplex $\Delta_m \defined \mathcal{P}([m])$. When random variables $\rA$, $\rX$, $\rZ$ form a Markov chain, we write $\rA -\rX - \rZ$. We write the mutual information between $\rA$, $\rX$ as $I(\rA;\rX) \defined \EEE{P_{\rA,\rX}}{\log\frac{P_{\rA,\rX}(\rA,\rX)}{P_{\rA}(\rA)P_{\rX}(\rX)}}$. Since $I(\rA;\rX)$ is determined by the marginal distribution $P_{\rA}$ and the conditional distribution $P_{\rX|\rA}$, we also write $I(\rA;\rX)$ as $I(P_{\rA};P_{\rX|\rA})$. When $\rA$, $\rX$ are independent, we write $\rA \indep \rX$.

If a random variable $\rA \in [n]$ has finite support, the conditional distribution $P_{\rX|\rA}: [n] \to \mathcal{P}(\mathcal{X})$ can be equivalently written as $\bP \defined (P_1,\cdots,P_n)$ where each $P_i = P_{\rX|\rA=i} \in \mathcal{P}(\mathcal{X})$. Additionally, if $\mathcal{X}$ is a finite set $[m]$, then $P_{\rX|\rA}$ can be fully characterized by a transition matrix. We use $\mathcal{T}(m|n)$ to denote all transition matrices from $[n]$ to $[m]$: $\left\{\bP\in \Reals^{n\times m} \ \Big| \ 0 \leq P_{i,j}\leq 1, \sum_{j=1}^m P_{i,j} = 1, \forall i\in [n]\right\}$.

\subsection*{Comparisons of Experiments}
\label{sec::comp_exp}

Given two statistical experiments (i.e., conditional distributions) $\bP$ and $\bQ$, is there a way to decide which one is more informative? Here $\bP$ and $\bQ$ have the common input alphabet $[n]$ and potentially different output spaces. Blackwell gave an answer in his seminal work \citep{blackwell1953equivalent} from a decision-theoretic perspective. We review these results next. 

Let $\mathcal{A}$ be a closed, bounded, convex subset of $\Reals^n$. A decision function $\bm{f}(x) = (a_1(x),\cdots, a_n(x))$ is any mapping from $\mathcal{X}$ to $\mathcal{A}$. It is associated with a loss vector:
\begin{align}
    \bv(\bm{f}) 
    = \left(\int a_1(x) \dif P_1(x),\cdots, \int a_n(x) \dif P_n(x) \right).
\end{align}
The collection of all $\bv(\bm{f})$ is denoted by $\mathcal{B}(\bP,\mathcal{A})$. Blackwell defined that $\bP$ is more informative than $\bQ$ if for every $\mathcal{A}$, $\mathcal{B}(\bP,\mathcal{A})\supseteq \mathcal{B}(\bQ,\mathcal{A})$. Intuitively, this result means any risk achievable with $\bQ$ is also achievable with $\bP$. Moreover, Blackwell considered the standard measure $P^*$ which is the probability distribution of $\bp(\bar{\rX})$ where $\bp(x): \mathcal{X} \to \Delta_n$ is a function defined as
\begin{align}
    \left(\frac{\dif P_1}{\dif P_1 + \cdots + \dif P_n} ,\cdots, \frac{\dif P_n}{\dif P_1 + \cdots + \dif P_n}\right).
\end{align} 
and $\bar{\rX}$ follows the probability distribution $\frac{P_1 + \cdots + P_n}{n}$. One of the most important findings by Blackwell in his paper is to discover the following equivalent conditions.
\begin{lem}[\citet{blackwell1951comparison,blackwell1953equivalent}]
\label{lem::blackwell_eqv}
The following three conditions are equivalent:
\begin{itemize}
    \item $\bP$ is more informative than $\bQ$;
    \item for any continuous and convex function $\phi:\Delta_n \to \Reals$, $\int \phi(\bp) \dif P^*(\bp) \geq \int \phi(\bp) \dif Q^*(\bp)$;
    \item there is a stochastic transformation $\mathsf{T}$ s.t. $\mathsf{T}P_i = Q_i$. In other words, there exists a Markov chain $\rA - \rX - \rZ$ for any distributions on $\rA$ such that $\bP = P_{\rX|\rA}$ and $\bQ = P_{\rZ|\rA}$. 
\end{itemize}
\end{lem}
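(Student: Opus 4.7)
The plan is to establish the cycle $(3) \Rightarrow (1) \Rightarrow (2) \Rightarrow (3)$. The first two implications admit clean decision-theoretic proofs; the last one---extracting a stochastic kernel from a convex-ordering statement---is the crux of Blackwell's theorem and is the main obstacle, requiring a dilation argument in the style of Strassen.

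For $(3) \Rightarrow (1)$, suppose $\mathsf{T}$ is a stochastic kernel with $\mathsf{T}P_i = Q_i$. Given any convex bounded $\mathcal{A}$ and any decision function $\bm{g}:\mathcal{Z}\to\mathcal{A}$ for $\bQ$, I would build a decision function $\bm{f}(x) = \int \bm{g}(z)\,\mathsf{T}(dz\mid x)$ for $\bP$; convexity of $\mathcal{A}$ keeps $\bm{f}$ well-defined, and since $\mathsf{T}P_i = Q_i$, Fubini yields $\bv(\bm{f}) = \bv(\bm{g})$, establishing $\mathcal{B}(\bQ,\mathcal{A}) \subseteq \mathcal{B}(\bP,\mathcal{A})$.

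For $(1) \Rightarrow (2)$, each continuous convex $\phi:\Delta_n\to\Reals$ can be represented via Legendre--Fenchel as a supremum of affine functions, $\phi(\bp) = \sup_{\ba\in\mathcal{A}_\phi} \left(\langle \ba,\bp\rangle - c(\ba)\right)$, over a bounded convex action set $\mathcal{A}_\phi \subset \Reals^n$ (approximating if necessary). A Bayes-rule calculation against the uniform prior on $[n]$ identifies $\int \phi(\bp)\,dP^*(\bp)$ with the supremum of a linear functional over $\mathcal{B}(\bP,\mathcal{A}_\phi)$ (and similarly for $\bQ$). The set containment from (1) then forces the corresponding suprema to obey the inequality in (2).

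The main obstacle is $(2) \Rightarrow (3)$. The integral inequality in (2) is exactly the statement that $P^*$ dominates $Q^*$ in the convex order on $\mathcal{P}(\Delta_n)$. By Strassen's theorem on dilations, this is equivalent to the existence of a Markov kernel $K$ realizing a martingale coupling $(\bm{q},\bp) \sim Q^*(d\bm{q})\,K(d\bp\mid\bm{q})$ with $\bp \sim P^*$ and $\EE{\bp \mid \bm{q}} = \bm{q}$. I would transport this dilation back to the original experiments by observing that under the uniform prior on $[n]$ the posterior map $\bp(\cdot):\mathcal{X}\to\Delta_n$ is a sufficient statistic for $\bP$ (with $P^*$ the pushforward of the mixture $\tfrac{1}{n}\sum_i P_i$), and similarly for the analogous posterior map $\bm{q}(\cdot):\mathcal{Z}\to\Delta_n$ and $\bQ$. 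Composing $K$ with the corresponding Bayes disintegrations produces the required $\mathsf{T}:\mathcal{X}\to\mathcal{P}(\mathcal{Z})$, and the martingale identity $\EE{\bp\mid\bm{q}} = \bm{q}$ together with Bayes' rule then verifies $\mathsf{T}P_i = Q_i$ for each $i$. The delicate technical step is the regular-conditional-probability / disintegration argument needed when $\mathcal{X}$ and $\mathcal{Z}$ are general measurable spaces, which is where most standard treatments of Blackwell's theorem invest their effort.
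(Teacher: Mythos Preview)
The paper does not actually prove Lemma~\ref{lem::blackwell_eqv}; it is quoted as a classical result of Blackwell. What the appendix does prove is a weighted variant of the equivalence $(1)\Leftrightarrow(2)$ (Theorem~\ref{thm::stand_meas}), and that proof is more concrete than your Legendre--Fenchel sketch: it works directly with polyhedral $\mathcal{A}=\mathsf{conv}(\ba_1,\dots,\ba_k)$, writes down the Bayes decision $\bm{f}^*(\bp)=\argmin_{\ba\in\mathcal{A}}\bp^T\bLambda^{-1}\ba$, and shows $\min_{\bv\in\mathcal{B}(\bP,\mathcal{A})}\sum_j v_j = \int \min_i\{\ba_i^T\bLambda^{-1}\bp\}\,\dif P^*_{\blambda}(\bp)$ by a change of variables through the posterior map. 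Your abstract duality argument and the paper's hands-on computation are two packagings of the same idea; the paper's version has the advantage of making the piecewise-linear structure explicit, which is exactly what drives Algorithm~\ref{alg:FATO}.

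Your cycle $(3)\Rightarrow(1)\Rightarrow(2)\Rightarrow(3)$ is sound, and invoking Strassen for $(2)\Rightarrow(3)$ is the standard modern route (Blackwell's 1953 argument predates Strassen and is more constructive). One point to tighten: the dilation $K$ you obtain from Strassen goes from $\bm{q}$ to $\bp$ (i.e., $K(d\bp\mid\bm{q})$ with barycenter $\bm{q}$), but the kernel $\mathsf{T}:\mathcal{X}\to\mathcal{P}(\mathcal{Z})$ needs to go the other way. What you actually use is the \emph{reverse} conditional $\bm{q}\mid\bp$ under the joint coupling; the verification $\mathsf{T}P_i=Q_i$ then reduces to
\[
\int \Pr(\bm{q}\in B\mid \bp)\, n p_i\, \dif P^*(\bp)
= n\,\EE{\indicator{B}(\bm{q})\,\EE{p_i\mid \bm{q}}}
= n\int_B q_i\,\dif Q^*(\bm{q})
= Q^*_i(B),
\]
which is where the martingale identity enters. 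Your phrase ``composing $K$ with the corresponding Bayes disintegrations'' hides this reversal; spelling it out would remove the only real ambiguity in the proposal.
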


If $\bP = P_{\rX|\rA}$ is more informative than $\bQ = P_{\rZ|\rA}$, by the third condition of Lemma~\ref{lem::blackwell_eqv} and the data processing inequality, $I(P_{\rA};P_{\rX|\rA})
\geq I(P_{\rA};P_{\rZ|\rA})$ holds for \emph{any} marginal distribution $P_{\rA}$. However, the converse does not hold in general---even if the above inequality holds for any $P_{\rA}$, $\bP$ is not necessarily more informative than $\bQ$  \citep{rauh2017coarse}. In this regard, Blackwell's conditions are ``stronger'' than the mutual information based data processing inequality.

\subsection*{Group Fair Classification}

\newcommand{\sccell}[2]{\setlength{\tabcolsep}{0pt}{\begin{tabular}{#1}#2 \end{tabular}}}

\begin{table*}[t]
\small
\centering
\resizebox{0.94\textwidth}{!}{
\renewcommand{\arraystretch}{1.25}
\begin{tabular}{lll}

\toprule

\textsc{Fairness Metric}

&

\textsc{Abbr.}

& 
\sccell{l}{
\textsc{Definition}\\
\textsc{Expression w.r.t.} $\bP$ 
}
\\
\toprule

Statistical Parity 

&

$\mathsf{SP}\leq \alpha_{\scalebox{.5}{\textnormal SP}}$

&

\sccell{l}{
$|\Pr(\hat{\rY}=\hat{y}|\rS=s)-\Pr(\hat{\rY}=\hat{y}|\rS=s')|\leq \alpha_{\scalebox{.5}{\textnormal SP}}$\\
$\left|\sum_{y=1}^{C} \left(\frac{\mu_{s,y}}{\mu_{s}} P_{(s,y),\hat{y}} - \frac{\mu_{s',y}}{\mu_{s'}} P_{(s',y),\hat{y}}\right) \right| \leq \alpha_{\scalebox{.5}{\textnormal SP}}$
}

\\ \midrule

Equalized Odds

&

$\mathsf{EO}\leq \alpha_{\scalebox{.5}{\textnormal EO}}$

& 

\sccell{l}{
$|\Pr(\hat{\rY} = \hat{y} | \rS=s,\rY=y) - \Pr(\hat{\rY} = \hat{y} | \rS=s',\rY=y)| \leq \alpha_{\scalebox{.5}{\textnormal EO}}$ \\
$\left|P_{(s,y),\hat{y}} - P_{(s',y),\hat{y}} \right|\leq \alpha_{\scalebox{.5}{\textnormal EO}}$
}
\\ \midrule

Overall Accuracy Equality

&
$\mathsf{OAE}\leq \alpha_{\scalebox{.5}{\textnormal OAE}}$

& 

\sccell{l}{
$|\Pr(\hat{\rY} = \rY|\rS=s)-\Pr(\hat{\rY}=\rY|\rS=s')|\leq \alpha_{\scalebox{.5}{\textnormal OAE}}$ \\
$\left|\sum_{y=1}^C \left( \frac{\mu_{s,y}}{\mu_s} P_{(s,y),y} - \frac{\mu_{s',y}}{\mu_{s'}} P_{(s',y),y} \right)\right|\leq \alpha_{\scalebox{.5}{\textnormal OAE}}$
}

\\

\bottomrule 
\end{tabular}
}
\caption{Standard group fairness metrics under multi-group and multi-class classification tasks. Here $\alpha_{\scalebox{.5}{\textnormal SP}}, \alpha_{\scalebox{.5}{\textnormal EO}}, \alpha_{\scalebox{.5}{\textnormal OAE}},\in [0,1]$ are threshold parameters, $\hat{y},y \in [C]$, $s,s'\in[A]$, and $\mu_{s,y}$, $\mu_{s}$ are defined in Proposition~\ref{prop::FATO_linear}. Our analysis can be extended to many other group fairness metrics \citep[see e.g., Table~1 in][]{kim2020fact}.}
\label{tabel:FairMetric}
\end{table*}

Consider a multi-class classification task, where the goal is to train a probabilistic classifier $h:\mathcal{X}\to \Delta_C$ that uses input features $\rX$ to predict their true label $\rY \in [C]$. Additionally, assume the classifier produces a predicted outcome $\hat{\rY} \in [C]$ and let $\rS\in [A]$ represent group attributes (e.g., race and sex). Depending on the domain of interest, $\rX$ can either include or exclude $\rS$ as an input to the classifier. Our framework can be easily extended to the setting where multiple subgroups overlap \citep{kearns2018preventing}. Throughout this paper, we focus on three standard group fairness measures: statistical parity ($\mathsf{SP}$) \citep{feldman2015certifying}, equalized odds ($\mathsf{EO}$) \citep{hardt2016equality,pleiss2017fairness}, and overall accuracy equality ($\mathsf{OAE}$) \citep{berk2021fairness} (see Table~\ref{tabel:FairMetric} for their definitions) but our analysis can be extended to many other group fairness metrics, including the ones in Table~1 of \citet{kim2020fact}, as well as alternative performance measures beyond accuracy.

\section{Fairness Pareto Frontier}

In this section, we introduce our main concept---fairness Pareto frontier ($\mathsf{FairFront}$). We use it to measure aleatoric discrimination and quantify epistemic discrimination by comparing a classifier's performance to the $\mathsf{FairFront}$. We recast $\mathsf{FairFront}$ in terms of the conditional distribution $P_{\hat{\rY}|\rS,\rY}$ and apply Blackwell's conditions to characterize the feasible region of this conditional distribution. This effort converts a functional optimization problem into a convex program with a small number of variables. However, this convex program may involve infinitely many constraints. Hence, we introduce a greedy improvement algorithm that iteratively refines the approximation of $\mathsf{FairFront}$ and tightens the feasible region of $P_{\hat{\rY}|\rS,\rY}$. Finally, we establish a convergence guarantee for our algorithm.

Recall that we refer to aleatoric discrimination as the inherent biases of the data distribution that can lead to an unfair or inaccurate classifier. As its definition suggests, aleatoric discrimination only relies on properties of the data distribution and fairness metric of choice---it does not depend on the hypothesis class nor optimization method. Below we introduce $\mathsf{FairFront}$ that delineates a curve of optimal accuracy over all probabilistic classifiers under certain fairness constraints for a given data distribution $P_{\rS,\rX,\rY}$. 
We use $\mathsf{FairFront}$ to quantify aleatoric discrimination. 
\begin{defn}
\label{defn::FATO}
For $\alpha_{\scalebox{.5}{\textnormal SP}}, \alpha_{\scalebox{.5}{\textnormal EO}}, \alpha_{\scalebox{.5}{\textnormal OAE}} \geq 0$ and a given $P_{\rS,\rX,\rY}$, we define $\mathsf{FairFront}(\alpha_{\scalebox{.5}{\textnormal SP}}, \alpha_{\scalebox{.5}{\textnormal EO}}, \alpha_{\scalebox{.5}{\textnormal OAE}})$ by
\begin{subequations}
\label{eq::FAO}
\begin{align}
    \mathsf{FairFront}(\alpha_{\scalebox{.5}{\textnormal SP}}, \alpha_{\scalebox{.5}{\textnormal EO}}, \alpha_{\scalebox{.5}{\textnormal OAE}}) 
    \defined \max_{h}~&\EE{\indicator{\hat{\rY}=\rY}}\\
    \sto~& \mathsf{SP} \leq \alpha_{\scalebox{.5}{\textnormal SP}}, \mathsf{EO} \leq \alpha_{\scalebox{.5}{\textnormal EO}}, \mathsf{OAE} \leq \alpha_{\scalebox{.5}{\textnormal OAE}}
\end{align}
\end{subequations}
where $\mathbb{I}$ is the indicator function; $\hat{\rY}$ is produced by applying the classifier $h$ to $\rX$; the maximum is taken over all measurable $h$; and the definitions of $\mathsf{SP}$, $\mathsf{EO}$, and $\mathsf{OAE}$ are in Table~\ref{tabel:FairMetric}. As a special case, if $\alpha_{\scalebox{.5}{\textnormal SP}}, \alpha_{\scalebox{.5}{\textnormal EO}}, \alpha_{\scalebox{.5}{\textnormal OAE}} \geq 1$, then $\mathsf{FairFront}(\alpha_{\scalebox{.5}{\textnormal SP}}, \alpha_{\scalebox{.5}{\textnormal EO}}, \alpha_{\scalebox{.5}{\textnormal OAE}})$ is the accuracy of the Bayes optimal classifier.
\end{defn}
Solving this functional optimization problem is difficult since it optimizes over all measurable classifiers. There is a line of works that proposed different fairness-intervention algorithms for training group-fair classifiers \citep[see e.g.,][]{menon2018cost,zhang2018mitigating,zafar2019fairness,celis2019classification,yang2020fairness,wei2021optimized,alghamdi2022beyond}. They restrict the model class and vary loss functions and optimizers to find classifiers that approach $\mathsf{FairFront}$ as close as possible. 
However, these algorithms only describe a lower bound for $\mathsf{FairFront}$. They do not determine what is the \emph{best} achievable accuracy for a given set of fairness constraints.

We circumvent the above-mentioned challenges by rewriting $\mathsf{FairFront}$ in terms of the conditional distribution $P_{\hat{\rY}|\rS,\rY}$. The caveat is that although each classifier yields a $P_{\hat{\rY}|\rS,\rY}$, not every conditional distribution corresponds to a valid classifier. Hence, we introduce the following definition which characterizes all feasible $P_{\hat{\rY}|\rS,\rY}$. 
\begin{defn}
\label{defn::achi_trans_mat}
Given $P_{\rX|\rS,\rY}$, we define $\mathcal{C}$ as the set of all conditional distributions $P_{\hat{\rY}|\rS,\rY}$ where $\hat{\rY}$ is produced by some probabilistic classifier $h$. In other words,
\begin{align}
    \mathcal{C}
    \defined \{P_{\hat{\rY}|\rS,\rY} \mid (\rS,\rY) - \rX -\hat{\rY}\}.
\end{align}
Throughout this paper, we write $P_{\hat{\rY}|\rS,\rY}$ or its corresponding transition matrix $\bP \in \mathcal{T}(C|AC)$ interchangeably. Specifically, the $(C(s-1)+y)$-th row, $\hat{y}$-th column of $\bP$ represents $P_{\hat{\rY}|\rS,\rY}(\hat{y}|s,y)$ and we denote it by $P_{(s,y),\hat{y}}$.
\end{defn}

\begin{rem}
\label{rem::trans_conf}
We demonstrate the connection between the conditional distribution $P_{\hat{\rY}|\rS,\rY}$ and confusion matrices in the setting of binary classification with binary groups. We define $\hat{\mathcal{C}}$ as the counterpart of $\mathcal{C}$ when we replace $P_{\rX|\rS,\rY}$ with an empirical distribution $\hat{P}_{\rX|\rS,\rY}$ computed from a dataset. The confusion matrix for group $s\in \{0,1\}$ consists of four numbers: True Positive ($\mathsf{TP}_{s}$), False Positive ($\mathsf{FP}_{s}$), False Negative ($\mathsf{FN}_{s}$), True Negative ($\mathsf{TN}_{s}$). Assume that the number of positive-label data $n_s^{+} = \mathsf{TP}_{s} + \mathsf{FN}_{s}$ and negative-label data $n_s^{-} = \mathsf{TN}_{s} + \mathsf{FP}_{s}$ are given---these numbers do not depend on the classifier. Then there is a one-to-one mapping from each element in $\hat{\mathcal{C}}$ to a confusion matrix:
\begin{align*}
    &\hat{P}_{\hat{\rY}|\rS,\rY}(1|s,1) 
    = \frac{1}{n_s^{+}}\mathsf{TP}_{s}, \quad 
    \hat{P}_{\hat{\rY}|\rS,\rY}(1|s,0) 
    = \frac{1}{n_s^{-}} \mathsf{FP}_{s},\\
    &\hat{P}_{\hat{\rY}|\rS,\rY}(0|s,1)
    = \frac{1}{n_s^{+}} \mathsf{FN}_{s}, \quad \hat{P}_{\hat{\rY}|\rS,\rY}(0|s,0)
    = \frac{1}{n_s^{-}} \mathsf{TN}_{s}.
\end{align*}
Hence, $\hat{\mathcal{C}}$ essentially characterizes all feasible confusion matrices and $\mathcal{C}$ is the population counterpart of $\hat{\mathcal{C}}$. Note that $\mathcal{C}$ is determined by the underlying data distribution while $\hat{\mathcal{C}}$ (and confusion matrices) are tailored to a specific dataset. 
\end{rem}

\begin{prop}
\label{prop::FATO_linear}
$\mathsf{FairFront}(\alpha_{\scalebox{.5}{\textnormal SP}}, \alpha_{\scalebox{.5}{\textnormal EO}}, \alpha_{\scalebox{.5}{\textnormal OAE}})$ in \eqref{eq::FAO} is equal to the solution of the following convex optimization:
\begin{subequations}
\label{eq::FATO_conv}
\begin{align}
    \max_{\bP\in \Reals^{AC\times C}}~&\sum_{s=1}^A\sum_{y=1}^C \mu_{s,y} P_{(s,y),y}\\
    \sto~&\mathsf{SP} \leq \alpha_{\scalebox{.5}{\textnormal SP}}, \mathsf{EO} \leq \alpha_{\scalebox{.5}{\textnormal EO}}, \mathsf{OAE} \leq \alpha_{\scalebox{.5}{\textnormal OAE}}\\
    &\bP \in \mathcal{C}. \label{eq::FATO_PinC}
\end{align}
\end{subequations}
Here the constants $\mu_{s,y} \defined \Pr(\rS=s,\rY=y)$ and $\mu_s \defined \Pr(\rS=s)$ for $s\in [A]$, $y\in[A]$ and $P_{(s,y),\hat{y}}$ denotes the $(C(s-1)+y)$-th row, $\hat{y}$-th column of the transition matrix $\bP$, which is $P_{\hat{\rY}|\rS,\rY}(\hat{y}|s,y)$.
\end{prop}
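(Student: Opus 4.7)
The plan is to show that the functional optimization in \eqref{eq::FAO} depends on the classifier $h$ only through the induced conditional distribution $P_{\hat{\rY}|\rS,\rY}$, and then rewrite both the objective and the three fairness constraints as linear functionals of the transition matrix $\bP$. First I would expand the accuracy using the tower property: conditioning on $(\rS,\rY)$ and using $\hat{\rY} \perp (\rS,\rY) \mid \rX$ gives
\begin{align*}
\EE{\indicator{\hat{\rY}=\rY}}
= \sum_{s=1}^A \sum_{y=1}^C \Pr(\rS=s,\rY=y)\, \Pr(\hat{\rY}=y \mid \rS=s,\rY=y)
= \sum_{s,y} \mu_{s,y}\, P_{(s,y),y},
\end{align*}
matching the objective of \eqref{eq::FATO_conv}. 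For the fairness constraints, I would plug $\Pr(\hat{\rY}=\hat{y}\mid\rS=s)=\sum_{y}\frac{\mu_{s,y}}{\mu_s} P_{(s,y),\hat{y}}$ into the definitions of $\mathsf{SP}$, $\mathsf{EO}$, and $\mathsf{OAE}$ from Table~\ref{tabel:FairMetric} to confirm that each constraint is a linear inequality (after splitting the absolute value) in the entries of $\bP$.

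Next I would establish that the optimization over classifiers and the optimization over $\bP$ produce the same value. For the $\leq$ direction, any measurable $h:\mathcal{X}\to\Delta_C$ yields $\hat{\rY}$ with $(\rS,\rY)-\rX-\hat{\rY}$, so its $P_{\hat{\rY}|\rS,\rY}$ lies in $\mathcal{C}$ by Definition~\ref{defn::achi_trans_mat}; since the objective and constraints only depend on this conditional, the value of \eqref{eq::FAO} is bounded by that of \eqref{eq::FATO_conv}. For the $\geq$ direction, Definition~\ref{defn::achi_trans_mat} says that every $\bP\in\mathcal{C}$ is realized by some probabilistic classifier $h$ via the Markov chain $(\rS,\rY)-\rX-\hat{\rY}$, which provides a feasible point for \eqref{eq::FAO} attaining the same objective value.

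Finally I would verify convexity. The objective is linear in $\bP$, and each fairness constraint reduces to a pair of linear inequalities. It remains to check that $\mathcal{C}$ is convex. The set of probabilistic kernels $h:\mathcal{X}\to\Delta_C$ is convex, and the map
\begin{align*}
h \;\longmapsto\; P_{\hat{\rY}|\rS,\rY}(\hat{y}\mid s,y)=\int h(\hat{y}\mid x)\,\mathrm{d}P_{\rX|\rS,\rY}(x\mid s,y)
\end{align*}
is affine, so its image $\mathcal{C}$ is convex. Together with the implicit membership constraint $\bP\in\mathcal{T}(C|AC)$ (row-stochasticity) baked into $\mathcal{C}$, this makes \eqref{eq::FATO_conv} a convex program.

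The main obstacle is not the above bookkeeping but the fact that the constraint $\bP\in\mathcal{C}$ is defined only implicitly through the existence of an intermediate kernel $P_{\hat{\rY}|\rX}$ satisfying the Markov chain. The proposition itself only recasts \eqref{eq::FAO} as a convex program; giving a tractable \emph{explicit} description of $\mathcal{C}$ requires Blackwell's comparison of experiments (Lemma~\ref{lem::blackwell_eqv}) and is the subject of the subsequent sections. For proving the proposition as stated, I do not need to open up $\mathcal{C}$ beyond appealing to its definition.
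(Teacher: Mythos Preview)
Your proposal is correct and matches the paper's treatment: the paper does not give a separate proof of Proposition~\ref{prop::FATO_linear}, treating it as immediate from Table~\ref{tabel:FairMetric} (which already expresses $\mathsf{SP}$, $\mathsf{EO}$, $\mathsf{OAE}$ in terms of $\bP$) and Definition~\ref{defn::achi_trans_mat}. The only minor difference is that the paper establishes convexity of $\mathcal{C}$ in Lemma~\ref{lem::prop_conv_C} via a Bernoulli-mixing construction rather than your affine-image argument, but both are standard and equivalent; also, your appeal to $\hat{\rY}\perp(\rS,\rY)\mid\rX$ in the accuracy expansion is superfluous (the tower property alone suffices there), though it does no harm.
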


For example, in binary classification with a binary group attribute, the above optimization only has $8$ variables, $14$ linear constraints + a single convex constraint $\bP \in \mathcal{C}$. Hence, standard convex optimization solvers can directly compute its optimal value as long as we know how to characterize $\mathcal{C}$. 
\begin{rem}
\label{rem::ex_ind}
Note that \citet{kim2020fact} investigated fairness Pareto frontiers via confusion matrices. The main difference is that Definition~1 in \citet{kim2020fact} relaxed the constraint \eqref{eq::FATO_PinC} to $\bP \in \mathcal{T}(C|AC)$ where $\mathcal{T}(C|AC)$ represents \emph{all} transition matrices from $[AC]$ to $[C]$. This leads to a loose approximation of the frontier because $\mathcal{C}$ is often a strict subset of $\mathcal{T}(C|AC)$. To demonstrate this point, consider the scenario where $\rX 
\indep (\rS,\rY)$. Then $\hat{\rY} \indep (\rS,\rY)$ by data processing inequality so 
\begin{align}
\label{eq::XSY_ind_C}
    \mathcal{C} = \left\{\bP\in \mathcal{T}(C|AC) \mid \text{each row of }\bP \text{ is the same}\right\}.
\end{align}
Optimizing over $\mathcal{C}$ rather than $\mathcal{T}(C|AC)$ can significantly tighten the fairness Pareto frontier.
\end{rem}

Before diving into the analysis, we first introduce a function $\bg: \mathcal{X}\to \Delta_{AC}$ defined as $\bg(x) = \left(P_{\rS,\rY|\rX}(1,1|x), \cdots, P_{\rS,\rY|\rX}(A,C|x)\right).$ To obtain this function in practice, a common strategy among various post-processing fairness interventions \citep[see e.g.,][]{menon2018cost,alghamdi2022beyond} is to train a probabilistic classifier that uses input features $\rX$ to predict $(\rS,\rY)$. The output probability generated by this classifier is then utilized as an approximation of the function $\bg$.

The following theorem is the main theoretical result in this paper. It provides a precise characterization of the set $\mathcal{C}$ through a series of convex constraints. 
\begin{thm}
\label{thm::chara_FAT_set}
The set $\mathcal{C}$ is the collection of all transition matrices $\bP \in \mathcal{T}(C|AC)$ such that the following condition holds:\\
For any $k \in \mathcal{N}$ and any $\{\ba_i \mid \ba_i \in [-1,1]^{AC}, i\in[k]\}$,
\begin{align}
\label{eq::chara_C}
    \sum_{\hat{y}=1}^C \max_{i\in[k]} \left\{\ba_i^T\bLambda_{\mu} \bp_{\hat{y}}\right\}
    \leq \EE{\max_{i\in [k]}\{\ba_i^T \bg(\rX)\}},
\end{align}
where $\bp_{\hat{y}}$ is the $\hat{y}$-th column of $\bP$ and $\bLambda_{\mu}=\diag(\mu_{1,1},\cdots,\mu_{A,C})$.
\end{thm}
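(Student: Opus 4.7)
My plan is to derive \eqref{eq::chara_C} as a support-function characterization of the convex set of achievable posterior-moment vectors. The starting point is Bayes' rule: $\mu_{s,y}P_{\rX|\rS,\rY}(x|s,y) = P_\rX(x)\bg(x)_{(s,y)}$. It follows that a probabilistic classifier $h(\hat{y}|x)\geq 0$ with $\sum_{\hat{y}}h(\hat{y}|x) = 1$ realizes $\bP\in\mathcal{C}$ if and only if
\begin{align*}
    \bLambda_\mu \bp_{\hat{y}} \;=\; \EE{h(\hat{y}|\rX)\,\bg(\rX)}, \qquad \hat{y}\in[C].
\end{align*}
Therefore $\bP\in\mathcal{C}$ is equivalent to the tuple $(\bLambda_\mu\bp_{\hat{y}})_{\hat{y}\in[C]}$ lying in the convex set
\begin{align*}
    K \;\defined\; \left\{\left(\EE{h(\hat{y}|\rX)\bg(\rX)}\right)_{\hat{y}\in[C]} : h \text{ a probabilistic classifier}\right\} \subseteq (\Reals^{AC})^C.
\end{align*}

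Next I would compute the support function $\sigma_K$. For any dual tuple $(\bc_1,\ldots,\bc_C)$, pointwise-in-$x$ optimization of $\sum_{\hat{y}}h(\hat{y}|x)\bc_{\hat{y}}^T\bg(x)$ under the simplex constraints is achieved by a hard-max classifier, yielding
\begin{align*}
    \sigma_K(\bc_1,\ldots,\bc_C) = \EE{\max_{\hat{y}\in[C]} \bc_{\hat{y}}^T \bg(\rX)}.
\end{align*}
By the bipolar theorem applied to the closed convex $K$, this implies $\bP\in\mathcal{C}$ iff the inequality $\sum_{\hat{y}}\bc_{\hat{y}}^T\bLambda_\mu\bp_{\hat{y}} \leq \EE{\max_{\hat{y}}\bc_{\hat{y}}^T\bg(\rX)}$ holds for every $(\bc_{\hat{y}}) \in (\Reals^{AC})^C$.

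The remaining step is to show this dual form is equivalent to \eqref{eq::chara_C}. One direction: setting $k=C$ and $\ba_i \defined \bc_i$ in \eqref{eq::chara_C} implies the dual inequality since $\sum_{\hat{y}}\bc_{\hat{y}}^T\bLambda_\mu\bp_{\hat{y}} \leq \sum_{\hat{y}}\max_i\ba_i^T\bLambda_\mu\bp_{\hat{y}}$. Conversely, given $k$ and $\{\ba_i\}_{i=1}^k$, pick $\bc_{\hat{y}} \defined \ba_{i^*(\hat{y})}$ with $i^*(\hat{y})\in\argmax_i \ba_i^T\bLambda_\mu\bp_{\hat{y}}$; the dual LHS then equals the LHS of \eqref{eq::chara_C}, while the dual RHS is at most $\EE{\max_i\ba_i^T\bg(\rX)}$ because $\{\bc_{\hat{y}}\}_{\hat{y}}\subseteq\{\ba_i\}_{i\in[k]}$. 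The restriction $\ba_i\in[-1,1]^{AC}$ is without loss of generality thanks to the positive homogeneity of \eqref{eq::chara_C} under a common scaling of all $\ba_i$'s.

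The main obstacle is verifying that $K$ is closed so that bipolarity applies cleanly---this reduces to a weak-$*$ compactness argument for the Markov kernels $h(\hat{y}|x)$ together with measurable selection for the hard-max classifier on a general $\mathcal{X}$. An alternative route, more in keeping with the paper's Blackwell-based narrative, is to use positive homogeneity to rewrite \eqref{eq::chara_C} as $\EE{\phi(\bh(\hat{\rY}))} \leq \EE{\phi(\bg(\rX))}$ for all piecewise-linear convex $\phi$ on $\Delta_{AC}$, where $\bh(\hat{y}) \defined P_{\rS,\rY|\hat{\rY}}(\cdot|\hat{y})$; density of such $\phi$ in continuous convex functions reduces this to the $\mu$-weighted version of Blackwell's standard-measure condition (condition~2 of Lemma~\ref{lem::blackwell_eqv}), which then delivers the Markov realization $h$.
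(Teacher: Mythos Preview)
Your primary route via the support function of $K$ and the bipolar theorem is correct and is genuinely different from the paper's argument. The paper does not compute $\sigma_K$ directly; instead it invokes a $\bmu$-weighted adaptation of Blackwell's standard-measure criterion (proved separately in an appendix), identifies $\mathcal{C}$ with the set of $\bP$ that are Blackwell-dominated by $P_{\rX|\rS,\rY}$, observes that the weighted standard measure of $P_{\rX|\rS,\rY}$ is $\mathcal{L}(\bg(\rX))$ under $\rX\sim P_\rX$, and then specializes the convex-function test to homogeneous piecewise-linear $\phi(\bp)=\max_i\ba_i^T\bp$. In other words, the paper's proof is exactly the ``alternative route'' you sketch at the end. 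What your direct approach buys is that it bypasses the comparison-of-experiments layer entirely and makes the dual/separation structure transparent; the price is that you must establish closedness of $K$ yourself (your weak-$*$ compactness remark is the right argument: the admissible $h$'s form a weak-$*$ compact subset of $(L^\infty(P_\rX))^C$ by Banach--Alaoglu, and the map $h\mapsto(\EE{h(\hat{y}|\rX)\bg(\rX)})_{\hat{y}}$ is weak-$*$-to-norm continuous since $\bg\in L^1(P_\rX)$). The paper's Blackwell route absorbs this closedness issue into the cited equivalence in Lemma~\ref{lem::blackwell_eqv}, and in exchange connects the result to a classical theory that motivates the rest of the paper. Your reduction from the bipolar inequality to \eqref{eq::chara_C} via $k=C$ in one direction and $\bc_{\hat{y}}=\ba_{i^*(\hat{y})}$ in the other, together with positive homogeneity to restrict to $[-1,1]^{AC}$, is clean and matches how the paper handles the normalization.
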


Intuitively, \eqref{eq::chara_C} uses piecewise linear functions to approximate the boundary of the convex set $\mathcal{C}$ where $k$ represents the number of linear pieces. Unfortunately, replacing $\bP \in \mathcal{C}$ with this series of constraints in \eqref{eq::FATO_conv} may result in an intractable problem since standard duality-based approaches will lead to infinitely many dual variables. To resolve this issue, we first fix $k$ and let $\mathcal{C}_k$ be the set of $\bP$ such that \eqref{eq::chara_C} holds under this fixed $k$. Accordingly, we define $\mathsf{FairFront}_k(\alpha_{\scalebox{.5}{\textnormal SP}}, \alpha_{\scalebox{.5}{\textnormal EO}}, \alpha_{\scalebox{.5}{\textnormal OAE}})$ as the optimal value of $\eqref{eq::FATO_conv}$ when replacing $\mathcal{C}$ with $\mathcal{C}_k$. Since $\mathcal{C}_1 \supseteq \mathcal{C}_2 \supseteq \cdots \supseteq \mathcal{C}$, we have $\mathsf{FairFront}_1(\alpha_{\scalebox{.5}{\textnormal SP}}, \alpha_{\scalebox{.5}{\textnormal EO}}, \alpha_{\scalebox{.5}{\textnormal OAE}}) \geq \mathsf{FairFront}_2(\alpha_{\scalebox{.5}{\textnormal SP}}, \alpha_{\scalebox{.5}{\textnormal EO}}, \alpha_{\scalebox{.5}{\textnormal OAE}})\geq \cdots \geq \mathsf{FairFront}(\alpha_{\scalebox{.5}{\textnormal SP}}, \alpha_{\scalebox{.5}{\textnormal EO}}, \alpha_{\scalebox{.5}{\textnormal OAE}}).$ However, computing $\mathsf{FairFront}_k(\alpha_{\scalebox{.5}{\textnormal SP}}, \alpha_{\scalebox{.5}{\textnormal EO}}, \alpha_{\scalebox{.5}{\textnormal OAE}})$ still involves infinitely many constraints.

\begin{algorithm}[!tb]
\begingroup
\small
\caption{Approximate the fairness Pareto frontier.}
\label{alg:FATO}

\begin{algorithmic}[*]
\State {\bfseries Input:} $\mathcal{D}=\{(x_i,y_i,s_i)\}_{i=1}^N$, max number of iterations $T$; max pieces $k$;  classifier $g(x)$; $\alpha_{\scalebox{.5}{\textnormal SP}}$, $\alpha_{\scalebox{.5}{\textnormal EO}}$, $\alpha_{\scalebox{.5}{\textnormal OAE}}$.

\State \textbf{Initialize:} \vspace{0.25em}
set $\mathcal{A} = \emptyset$; $\mu_{s,y} = \frac{\left|\{i|s_i=s,y_i=y\}\right|}{N}$; $t=1$.

\State \textbf{Repeat:}

\State \quad Solve a convex program: 
    \begin{subequations}
    \begin{align*}
        \max_{\bP}~&\sum_{s=1}^A\sum_{y=1}^C \mu_{s,y} P_{(s,y),y}\\
        \sto~&\bP \in \mathcal{T}(C|AC), 
        \mathsf{SP} \leq \alpha_{\scalebox{.5}{\textnormal SP}}, \mathsf{EO} \leq \alpha_{\scalebox{.5}{\textnormal EO}}, \mathsf{OAE} \leq \alpha_{\scalebox{.5}{\textnormal OAE}}\\
        &\sum_{\hat{y}=1}^C \max_{i\in[k]} \left\{\ba_i^T\bLambda_{\mu} \bp_{\hat{y}}\right\}
        \leq \EE{\max_{i\in [k]}\{\ba_i^T \bg(\rX)\}}
        \quad \forall (\ba_1,\cdots,\ba_k)\in \mathcal{A}.
    \end{align*}
    \end{subequations}
\State \quad Let $v^t$ and $\bP^t$ be the optimal value and optimal solution.

\State \quad Solve a DC program:
\begin{align*}
    \min_{\substack{\ba_i \in [-1,1]^{AC}\\ i\in[k]}}~\EE{\max_{i\in [k]}\{\ba_i^T \bg(\rX)\}} - \sum_{\hat{y}=1}^C \max_{i\in[k]}\left\{\ba_i^T\bLambda_{\mu} \bp_{\hat{y}}^t\right\}.
\end{align*}

\State \quad\textbf{If} the optimal value is $\geq 0$ or $t = T$, 

\State \quad \quad \textbf{stop};

\State \quad\textbf{otherwise}, 

\State \quad \quad add the optimal $(\ba_1,\cdots,\ba_k)$ to $\mathcal{A}$ and $t=t+1$.

\State \textbf{return:} $v^t$, $\bP^t$, $\mathcal{A}$.

\end{algorithmic}

\endgroup

\end{algorithm}

Next, we introduce a greedy improvement algorithm that consists of solving a sequence of tractable optimization problems for approximating $\mathsf{FairFront}_k(\alpha_{\scalebox{.5}{\textnormal SP}}, \alpha_{\scalebox{.5}{\textnormal EO}}, \alpha_{\scalebox{.5}{\textnormal OAE}})$. We use $\mathcal{A}$ to collect the constraints of $\bP$ and set $\mathcal{A}=\emptyset$ initially. At each iteration, our algorithm solves a convex program to find an optimal $\bP$ that maximizes the accuracy while satisfying the desired group fairness constraints and the constraints in $\mathcal{A}$; then we verify if this $\bP$ is within the set $\mathcal{C}_k$ by solving a DC (difference of convex) program \citep{shen2016disciplined,horst1999dc}. If $\bP \in \mathcal{C}_k$, then the algorithm stops. Otherwise, the algorithm will find the constraint that is mostly violated by $\bP$ and add this constraint to $\mathcal{A}$. 
Specifically, we determine a piecewise linear function that divides the space into two distinct regions: one containing $\bP$ and the other containing $\mathcal{C}_k$. By ``mostly violated'', we mean the function is constructed to maximize the distance between $\bP$ and the boundary defined by the function. 
We describe our algorithm in Algorithm~\ref{alg:FATO} and establish a convergence guarantee below.

\begin{thm}
\label{thm::convergence}
Let $T=\infty$. If Algorithm~\ref{alg:FATO} stops, its output $\bP^t$ is an optimal solution of $\mathsf{FairFront}_k(\alpha_{\scalebox{.5}{\textnormal SP}}, \alpha_{\scalebox{.5}{\textnormal EO}}, \alpha_{\scalebox{.5}{\textnormal OAE}})$. Otherwise, any convergent sub-sequence of $\{\bP^t\}_{t=1}^{\infty}$ converges to an optimal solution of $\mathsf{FairFront}_k(\alpha_{\scalebox{.5}{\textnormal SP}}, \alpha_{\scalebox{.5}{\textnormal EO}}, \alpha_{\scalebox{.5}{\textnormal OAE}})$.
\end{thm}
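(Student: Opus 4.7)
The algorithm is a cutting-plane / outer approximation scheme for the (semi-infinite) program $\mathsf{FairFront}_k$, so I would follow the standard convergence pattern: (i) handle the finite-termination case, (ii) show the sequence of objective values is monotone and converges, and (iii) show that every cluster point of $\{\bP^t\}$ is feasible for $\mathcal{C}_k$ and attains the limiting objective value, hence is optimal. Throughout, let
\[
\Phi(\bP, \ba_1,\dots,\ba_k) \;=\; \EE{\max_{i\in[k]}\{\ba_i^T \bg(\rX)\}} \;-\; \sum_{\hat{y}=1}^C \max_{i\in[k]}\{\ba_i^T \bLambda_{\mu} \bp_{\hat{y}}\},
\]
and $\psi(\bP) = \min_{\ba_i\in[-1,1]^{AC}}\Phi(\bP,\ba_1,\dots,\ba_k)$, so that $\bP\in\mathcal{C}_k$ iff $\psi(\bP)\ge 0$ by Theorem~\ref{thm::chara_FAT_set}. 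Note that $\Phi$ is continuous in $(\bP,\ba_1,\dots,\ba_k)$ and, since $[-1,1]^{AC\cdot k}$ is compact, Berge's maximum theorem gives that $\psi$ is continuous in $\bP$.

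\textbf{Stopping case.} If Algorithm~\ref{alg:FATO} stops at iteration $t$, then by the stopping rule the DC program returns a value $\geq 0$, i.e.\ $\psi(\bP^t)\ge 0$, so $\bP^t\in\mathcal{C}_k$. The constraints in the convex program at iteration $t$ are a subset of those defining $\mathsf{FairFront}_k$ (the fairness constraints are shared, and $\mathcal{A}$ contains only constraints that are valid for every $\bP\in\mathcal{C}_k$). Hence $v^t \geq \mathsf{FairFront}_k(\alpha_{\scalebox{.5}{\textnormal SP}},\alpha_{\scalebox{.5}{\textnormal EO}},\alpha_{\scalebox{.5}{\textnormal OAE}})$, while $\bP^t$ is itself feasible for $\mathsf{FairFront}_k$, so $v^t \leq \mathsf{FairFront}_k$. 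Equality and optimality of $\bP^t$ follow.

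\textbf{Non-stopping case.} Adding a cut to $\mathcal{A}$ only shrinks the feasible region, so $\{v^t\}$ is non-increasing; it is also bounded below by $\mathsf{FairFront}_k\ge 0$, so $v^t \downarrow v^\star$ for some $v^\star \geq \mathsf{FairFront}_k$. Consider any convergent sub-sequence $\bP^{t_j}\to \bP^\star$. The sequence $\bP^{t_j}$ lies in the compact set $\mathcal{T}(C|AC)$, which is closed under the (closed) fairness constraints, so $\bP^\star$ satisfies $\mathsf{SP}\le\alpha_{\scalebox{.5}{\textnormal SP}}$, $\mathsf{EO}\le\alpha_{\scalebox{.5}{\textnormal EO}}$, $\mathsf{OAE}\le\alpha_{\scalebox{.5}{\textnormal OAE}}$, and the continuous objective satisfies $\sum_{s,y}\mu_{s,y}P^\star_{(s,y),y} = v^\star$. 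It remains to show $\bP^\star\in\mathcal{C}_k$; given this, $\bP^\star$ is feasible for $\mathsf{FairFront}_k$, so $v^\star \leq \mathsf{FairFront}_k$, which combined with $v^\star\geq \mathsf{FairFront}_k$ yields optimality.

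\textbf{The main obstacle: feasibility in the limit.} The crux is to prove $\psi(\bP^\star)\geq 0$. Let $\ba^{t} = (\ba_1^{t},\dots,\ba_k^{t})$ be the minimizer returned by the DC program at iteration $t$ and $\varepsilon_t \defined \Phi(\bP^t,\ba^t) = \psi(\bP^t)$; if the algorithm did not terminate, $\varepsilon_t < 0$. For any $t_{j'}>t_j$ the cut associated with $\ba^{t_j}$ is in $\mathcal{A}$ and is therefore enforced at iteration $t_{j'}$, so $\Phi(\bP^{t_{j'}},\ba^{t_j})\geq 0$. Subtracting,
\[
\Phi(\bP^{t_{j'}},\ba^{t_j}) - \Phi(\bP^{t_j},\ba^{t_j}) \;\geq\; -\varepsilon_{t_j} \;=\; |\varepsilon_{t_j}|.
\]
Because $\ba_i^{t_j}\in[-1,1]^{AC}$ and $\bLambda_{\mu}$ is fixed, $\bP\mapsto \Phi(\bP,\ba^{t_j})$ is Lipschitz in $\bP$ with some constant $L$ independent of $j$, so $L\,\|\bP^{t_{j'}}-\bP^{t_j}\| \geq |\varepsilon_{t_j}|$. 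Since $\{\bP^{t_j}\}$ converges, it is Cauchy, so taking $j,j'\to\infty$ forces $\varepsilon_{t_j}\to 0$. By continuity of $\psi$, $\psi(\bP^\star) = \lim_j \psi(\bP^{t_j}) = \lim_j \varepsilon_{t_j} = 0$, hence $\bP^\star\in\mathcal{C}_k$ and the optimality argument above closes the proof. The trickiest part is exactly this Cauchy-plus-Lipschitz estimate; the rest is standard semi-infinite programming bookkeeping.
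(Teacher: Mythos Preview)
Your proposal is correct and complete, but the feasibility-in-the-limit step takes a genuinely different route from the paper. The paper argues by contradiction: assuming $\bP^\star\notin\mathcal{C}_k$, it picks a fixed violated direction $\bar{\ba}$, uses $\bP^\star\in\mathcal{F}^t$ for all $t$ (since the tail of the subsequence lies in each closed $\mathcal{F}^t$) to get $g(\bP^\star;\ba^t)\le 0$, then extracts a convergent subsequence of the DC-optimal directions $\ba^t\to\ba^*$ by compactness of $[-1,1]^{AC\cdot k}$ and passes to the limit in both inequalities to reach a contradiction. You instead work \emph{directly}: the cut $\ba^{t_j}$ is satisfied by $\bP^{t_{j'}}$ for $j'>j$, and uniform Lipschitzness of $\Phi(\cdot,\ba)$ over $\ba\in[-1,1]^{AC\cdot k}$ converts this into $|\varepsilon_{t_j}|\le L\,\|\bP^{t_{j'}}-\bP^{t_j}\|$, whence the Cauchy property forces $\varepsilon_{t_j}\to 0$ and continuity of $\psi$ (Berge) closes the argument. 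Both exploit compactness of the $\ba$-domain, but in different ways: the paper via subsequence extraction, you via a uniform Lipschitz constant. Your route is slightly more quantitative---the inequality $|\varepsilon_{t_j}|\le L\,\|\bP^{t_{j'}}-\bP^{t_j}\|$ could in principle feed into a rate analysis---while the paper's contradiction is the more textbook semi-infinite/cutting-plane argument. The stopping case and the outer-relaxation bookkeeping are handled identically in both.
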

Note that the output $v^t$ from Algorithm~\ref{alg:FATO} is always an \emph{upper bound} for $\mathsf{FairFront}(\alpha_{\scalebox{.5}{\textnormal SP}}, \alpha_{\scalebox{.5}{\textnormal EO}}, \alpha_{\scalebox{.5}{\textnormal OAE}})$, assuming the estimation error is sufficiently small. The tightness of this upper bound is determined by $k$ (i.e., how well the piecewise linear functions approximate the boundary of $\mathcal{C}$), $T$ (i.e., the total number of iterations). On the other hand, running off-the-shelf in-processing and post-processing fairness interventions can only yield \emph{lower bounds} for $\mathsf{FairFront}(\alpha_{\scalebox{.5}{\textnormal SP}}, \alpha_{\scalebox{.5}{\textnormal EO}}, \alpha_{\scalebox{.5}{\textnormal OAE}})$.

\section{Numerical Experiments}

In this section, we demonstrate the tightness of our upper bound approximation of $\mathsf{FairFront}$, apply it to benchmark existing group fairness interventions, and show how data biases, specifically missing values, impact their effectiveness. 
We find that given sufficient data, SOTA fairness interventions are successful at reducing epistemic discrimination as their gap to (our upper bound estimate of) $\mathsf{FairFront}$ is small. 
However, we also discover that when different population groups have varying missing data patterns, aleatoric discrimination increases, which diminishes the performance of fairness intervention algorithms.
Our numerical experiments are semi-synthetic since we apply fairness interventions to train classifiers using the \emph{entire} dataset and resample from it as the test set. This setup enables us to eliminate the estimation error associated with Algorithm~\ref{alg:FATO} (see Appendix~\ref{append:future} for a discussion). 
We provide additional experimental results and details in Appendix~\ref{sec::append_exp}.

\subsection{Benchmark Fairness Interventions}
\label{subsec::bench_fair_interv}

\paragraph{Setup.} We evaluate our results on the UCI Adult dataset \citep{bache2013uci}, the ProPublica COMPAS dataset \citep{angwin2016machine}, the German Credit dataset \citep{bache2013uci}, and HSLS (High School Longitudinal Study) dataset \citep{ingels2011high,jeong2022fairness}. We recognize that Adult, COMPAS, and German Credit datasets are overused and acknowledge the recent calls to move away from them \citep[see e.g.,][]{ding2021retiring}. We adopt these datasets for benchmarking purposes only since most fairness interventions have available code for these datasets. 
The HSLS dataset is a new dataset that first appeared in the fair ML literature last year and captures a common use-case of ML in education \citep[student performance prediction, see ][]{jeong2022fairness}. It has multi-class labels and multiple protected groups.
We apply existing (group) fairness interventions to these datasets and measure their fairness violations via \emph{Max equalized odds}:
\begin{equation*}
    \max~|\Pr(\hat{\rY} = \hat{y} | \rS=s,\rY=y) - \Pr(\hat{\rY} = \hat{y} | \rS=s',\rY=y)|
\end{equation*}
where the max is taken over $y,\hat{y},s,s'$. We run Algorithm~\ref{alg:FATO} with $k=6$ pieces, $20$ iterations, and varying $\alpha_{\scalebox{.5}{\textnormal EO}}$ to estimate $\mathsf{FairFront}$ on each dataset. We compute the expectations and the $g$ function from the empirical distributions and solve the DC program by using the package in \citet{shen2016disciplined}. 
The details about how we pre-process these datasets and additional experimental results on the German Credit and HSLS datasets are deferred to Appendix~\ref{sec::append_exp}.

\paragraph{Group fairness interventions.} We consider five existing fairness-intervention algorithms: \texttt{Reduction}~\citep{agarwal2018reductions},  \texttt{EqOdds}~\citep{hardt2016equality}, \texttt{CalEqOdds}~\citep{pleiss2017fairness}, \texttt{LevEqOpp}~\citep{chzhen2019leveraging}, and \texttt{FairProjection}~\cite{alghamdi2022beyond}. Among them, \texttt{Reduction} is an in-processing method and the rest are all post-processing methods. For the first three benchmarks, we use the implementations from IBM AIF360 library~\citep{bellamy2018ai}; for \texttt{LevEqOpp} and \texttt{FairProjection}, we use the Python implementations from the Github repo in \citet{alghamdi2022beyond}. For \texttt{Reduction} and \texttt{FairProjection}, we can vary their tolerance of fairness violations to produce a fairness-accuracy curve; for \texttt{EqOdds}, \texttt{CalEqOdds}, and \texttt{LevEqOpp}, each of them produces a single point since they only allow hard equality constraint. We note that \texttt{FairProjection} is optimized for transforming probabilistic classifier outputs \citep[see also][]{wei2021optimized}, but here we threshold the probabilistic outputs to generate binary predictions which may limit its performance. Finally, we train a random forest as the \texttt{Baseline} classifier. 

\begin{figure*}[t]
\centering
\includegraphics[width=0.42\linewidth]{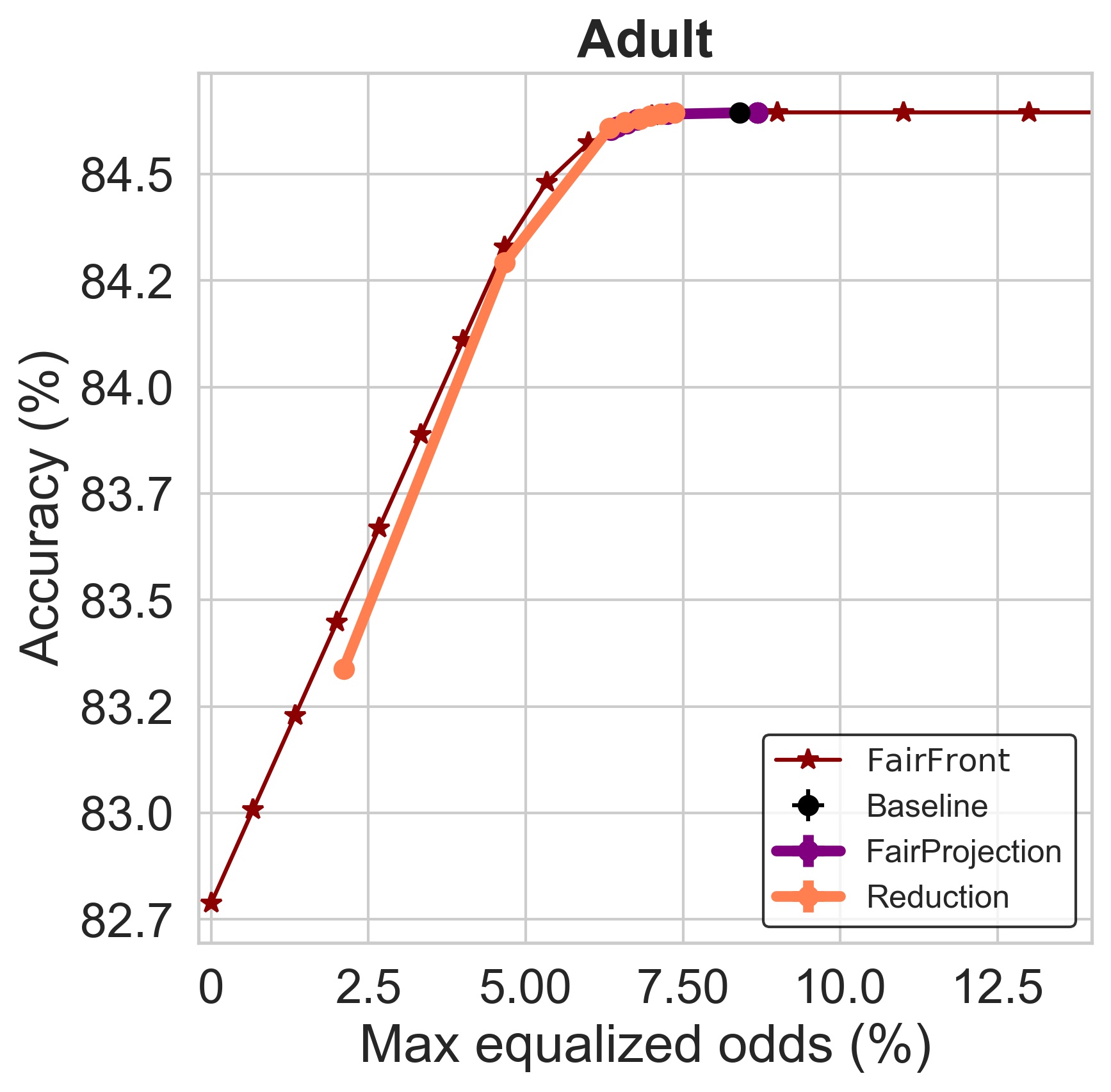}
\hspace{0.5cm}
\includegraphics[width=0.42\linewidth]{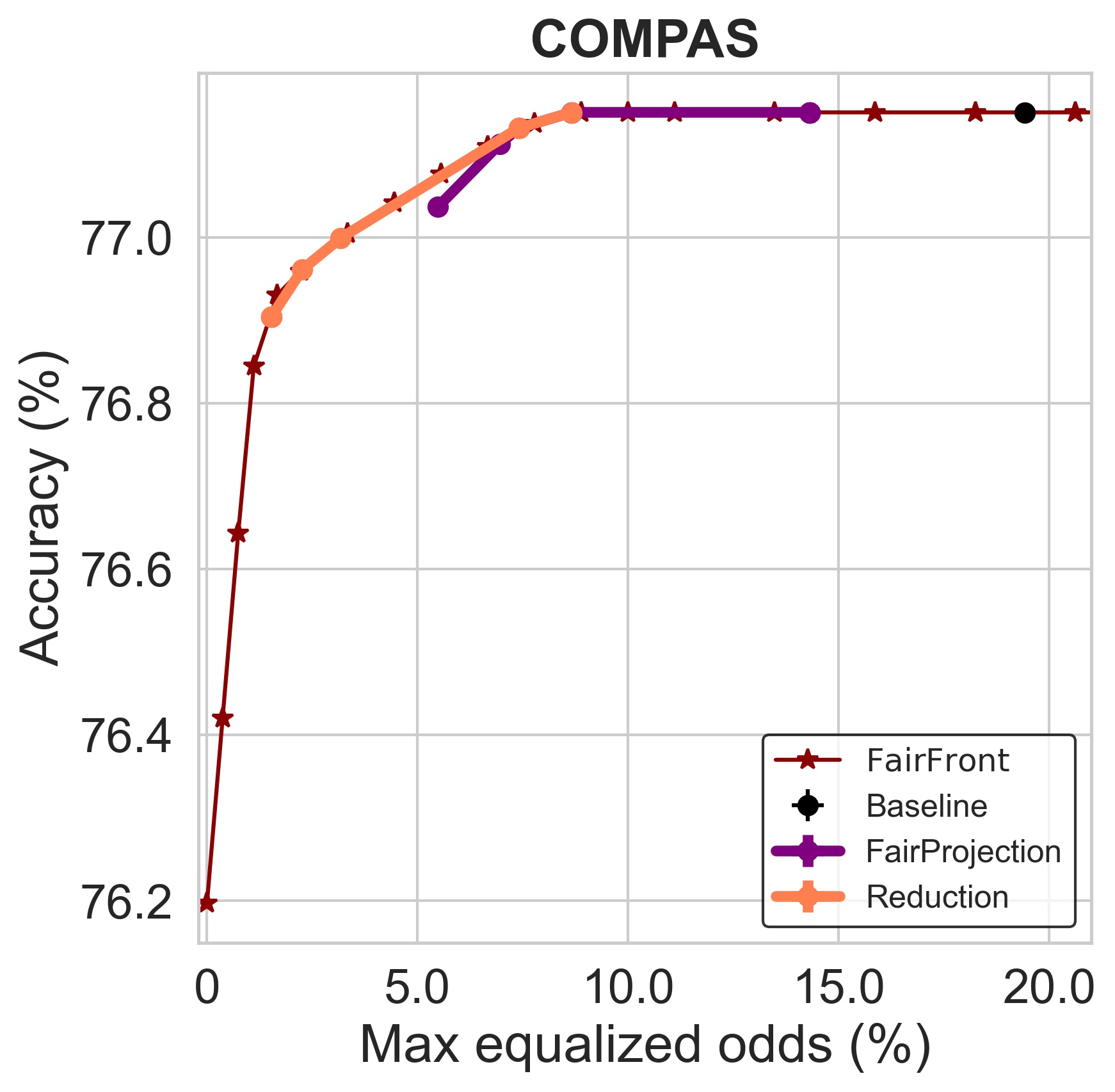}
\caption{We compare \texttt{Reduction} and \texttt{FairProjection} with (our upper bound estimate of) $\mathsf{FairFront}$ on the Adult (Left) and COMPAS (Right) datasets. We train a classifier that approximates the Bayes optimal and use it as a basis for \texttt{Reduction} and \texttt{FairProjection}. This result not only demonstrates the tightness of our approximation but also shows that SOTA fairness interventions have already achieved near-optimal fairness-accuracy curves.
}
\label{Fig::BayesOptimal}
\end{figure*}

\begin{figure*}[t]
\centering
\includegraphics[width=0.42\linewidth]{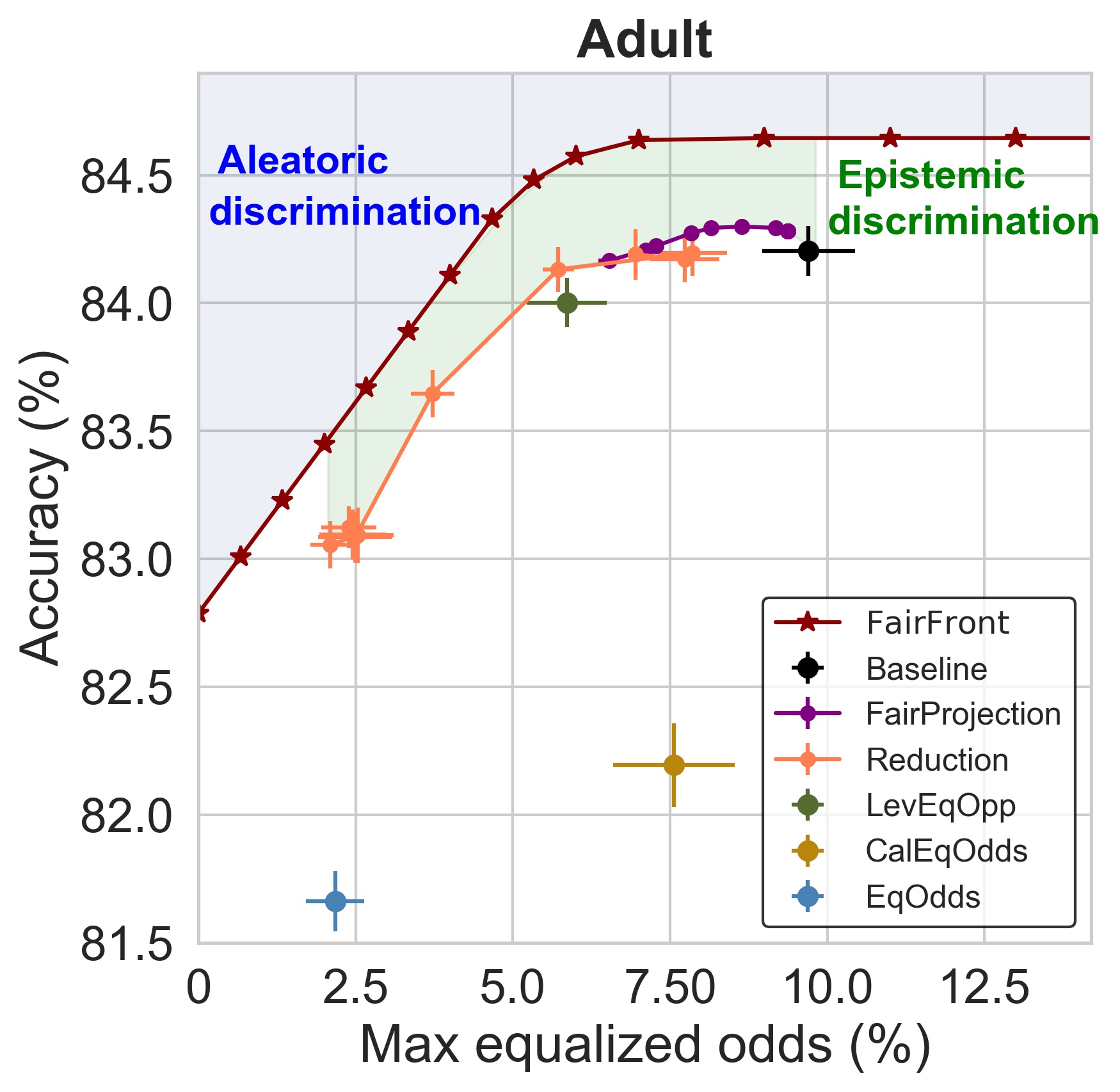}
\hspace{0.5cm}
\includegraphics[width=0.42\linewidth]{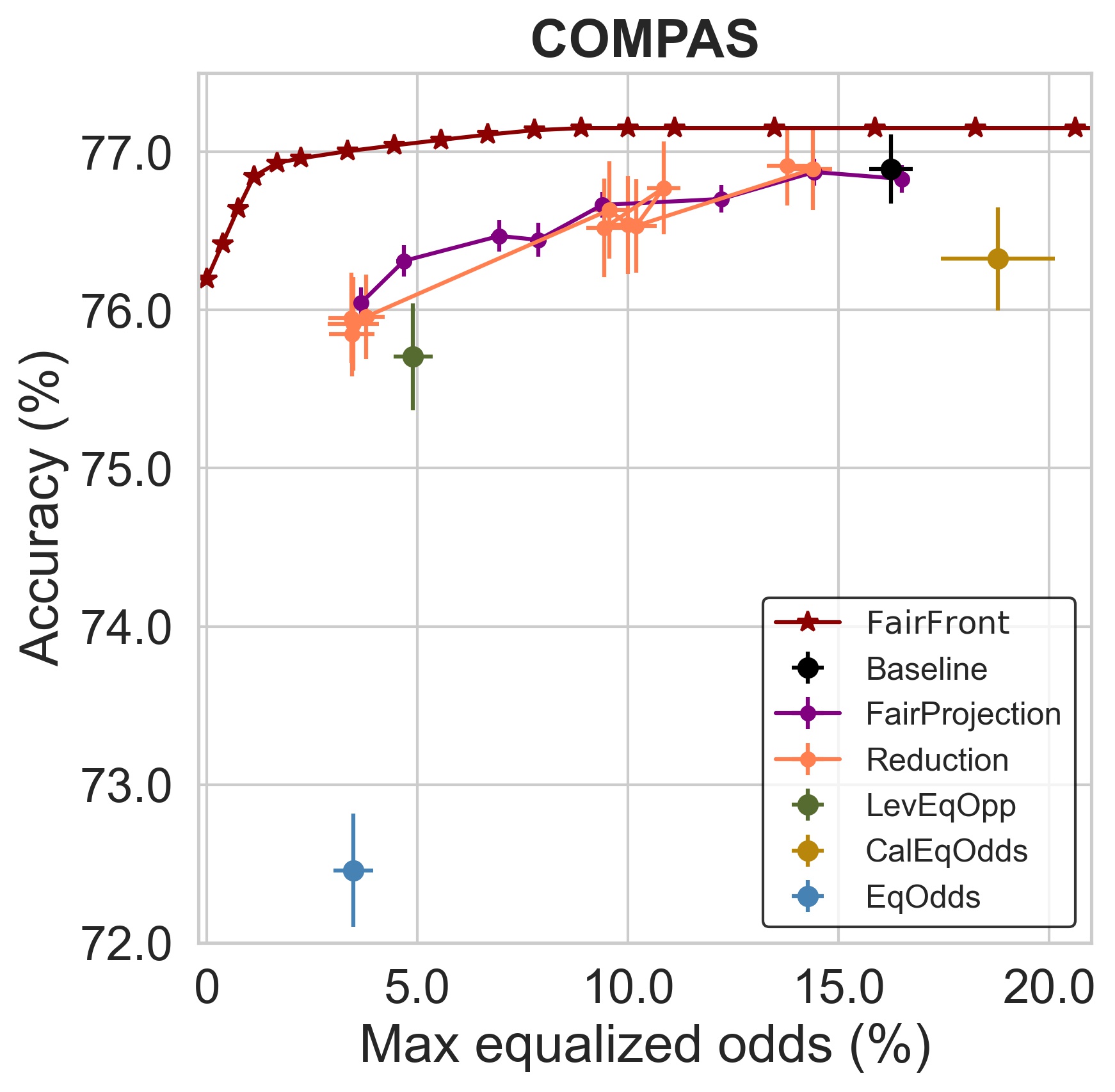}
\caption{We benchmark existing fairness interventions using (our upper bound estimate of) $\mathsf{FairFront}$. 
We use $\mathsf{FairFront}$ to quantify aleatoric discrimination and measure epistemic discrimination by comparing a classifier's accuracy and fairness violation with $\mathsf{FairFront}$. The results show that SOTA fairness interventions are effective at reducing epistemic discrimination.
}
\label{Fig::Fair_Frontier}
\end{figure*}

\paragraph{Results.} We observe that if we run Algorithm~\ref{alg:FATO} for a single iteration, which is equivalent to solving Proposition~\ref{prop::FATO_linear} without \eqref{eq::FATO_PinC}, its solution is very close to $1$ for all $\alpha_{\scalebox{.5}{\textnormal EO}}$. This demonstrates the benefits of incorporating Blackwell's conditions into the fairness Pareto frontier.

We train a classifier that approximates the Bayes optimal and use it as a basis for both \texttt{Reduction} and \texttt{FairProjection}, which are SOTA fairness interventions. We then apply these two fairness interventions to the entire dataset and evaluate their performance on the same dataset. Figure~\ref{Fig::BayesOptimal} shows that in this infinite sample regime, the fairness-accuracy curves produced by \texttt{Reduction} and \texttt{FairProjection} can approach our upper bound estimate of $\mathsf{FairFront}$. This result not only demonstrates the tightness of our approximation (recall that Algorithm~\ref{alg:FATO} gives an upper bound of $\mathsf{FairFront}$ and existing fairness interventions give  lower bounds) but also shows that SOTA fairness interventions have already achieved near-optimal fairness-accuracy curves.

Recall that we use $\mathsf{FairFront}$ to quantify aleatoric discrimination since it characterizes the highest achievable accuracy among all classifiers satisfying the desired fairness constraints. Additionally, we measure epistemic discrimination by comparing a classifier's accuracy and fairness violation with $\mathsf{FairFront}$. 
Given that our Algorithm~\ref{alg:FATO} provides a tight approximation of $\mathsf{FairFront}$, we use it to benchmark existing fairness interventions. 
Specifically, we first train a base classifier which may not achieve Bayes optimal accuracy. Then we use it as a basis for all existing fairness interventions. 
The results in Figure~\ref{Fig::Fair_Frontier} show that SOTA fairness interventions remain effective at reducing epistemic discrimination. 
In what follows, we demonstrate how missing values in data can increase aleatoric discrimination and dramatically reduce the effectiveness of SOTA fairness interventions.

\subsection{Fairness Risks in Missing Values}

Real-world data often have missing values and the missing patterns can be different across different protected groups \citep[see][for some examples]{jeong2022fairness}. There is a growing line of research \citep[see e.g.,][]{jeong2022fairness,fernando2021missing,wang2021analyzing,subramoniandiscrimination,caton2022impact,zhang2021assessing,schelter2019fairprep} studying the fairness risks of data with missing values. 
In this section, we apply our result to demonstrate how disparate missing patterns influence the fairness-accuracy curves.

\paragraph{Setup.} 

We choose sex (group~0: female, group~1: male) as the group attribute for the Adult dataset, and race (group~0: African-American, group~1: Caucasian) for the COMPAS dataset. To investigate the impact of disparate missing patterns on aleatoric discrimination, we artificially generate missing values in both datasets. This is necessary as the datasets do not contain sufficient missing data. The missing values are generated according to different probabilities for different population groups. For each data point from group~0, we erase each input feature with a varying probability $p_0 \in \{10\%, 50\%, 70\%\}$, while for group~1, we erase each input feature with a fixed probability $p_1 = 10\%$. We then apply mode imputation to the missing values, replacing them with the mode of non-missing values for each feature. Finally, we apply Algorithm~\ref{alg:FATO} along with \texttt{Reduction} and \texttt{Baseline} to the imputed data. The experimental results are shown in Figure~\ref{Fig::Reduce_Aleatoric}.

\begin{figure*}[t]
\centering
\includegraphics[width=0.42\linewidth]{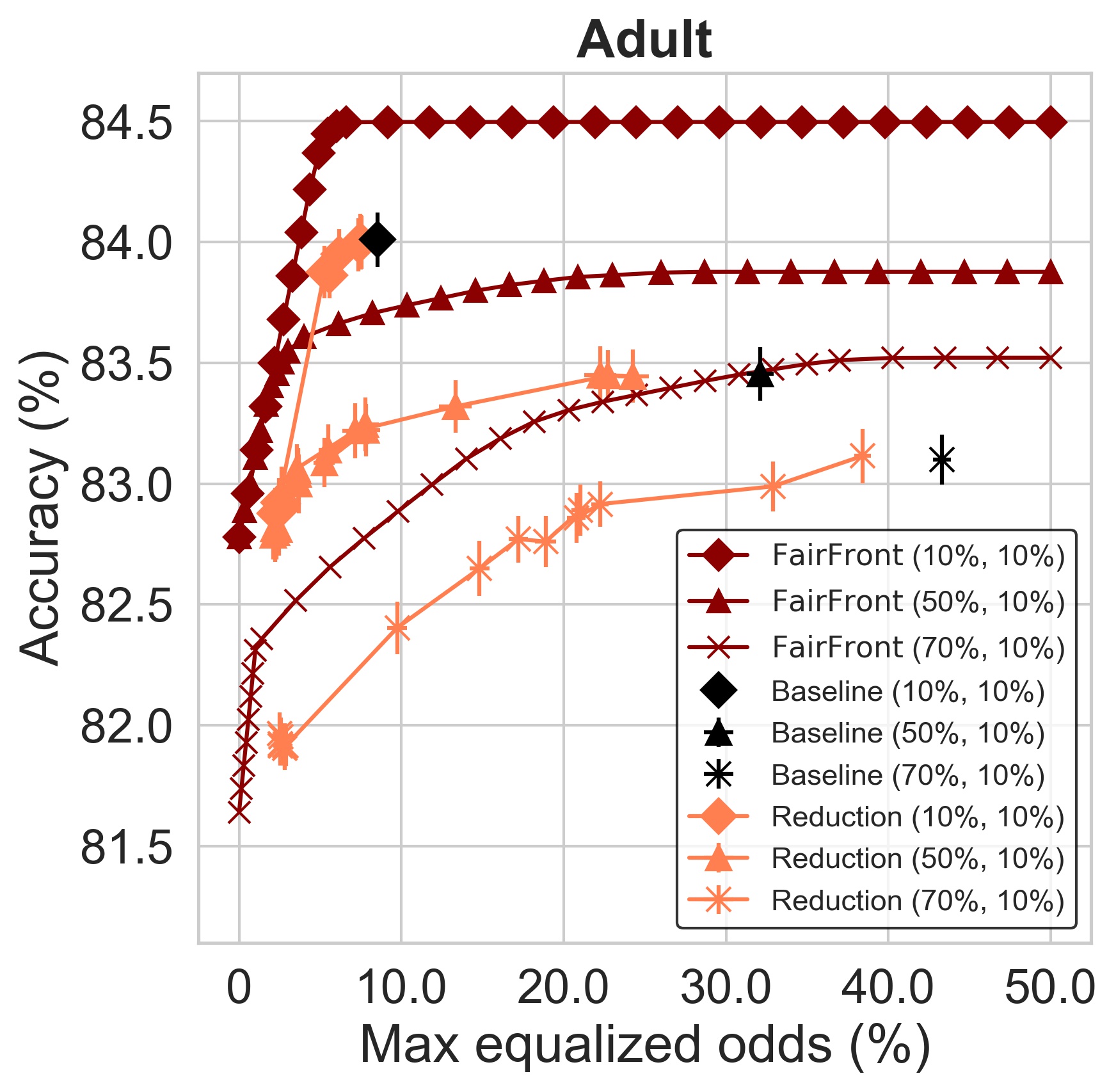}
\hspace{0.5cm}
\includegraphics[width=0.42\linewidth]{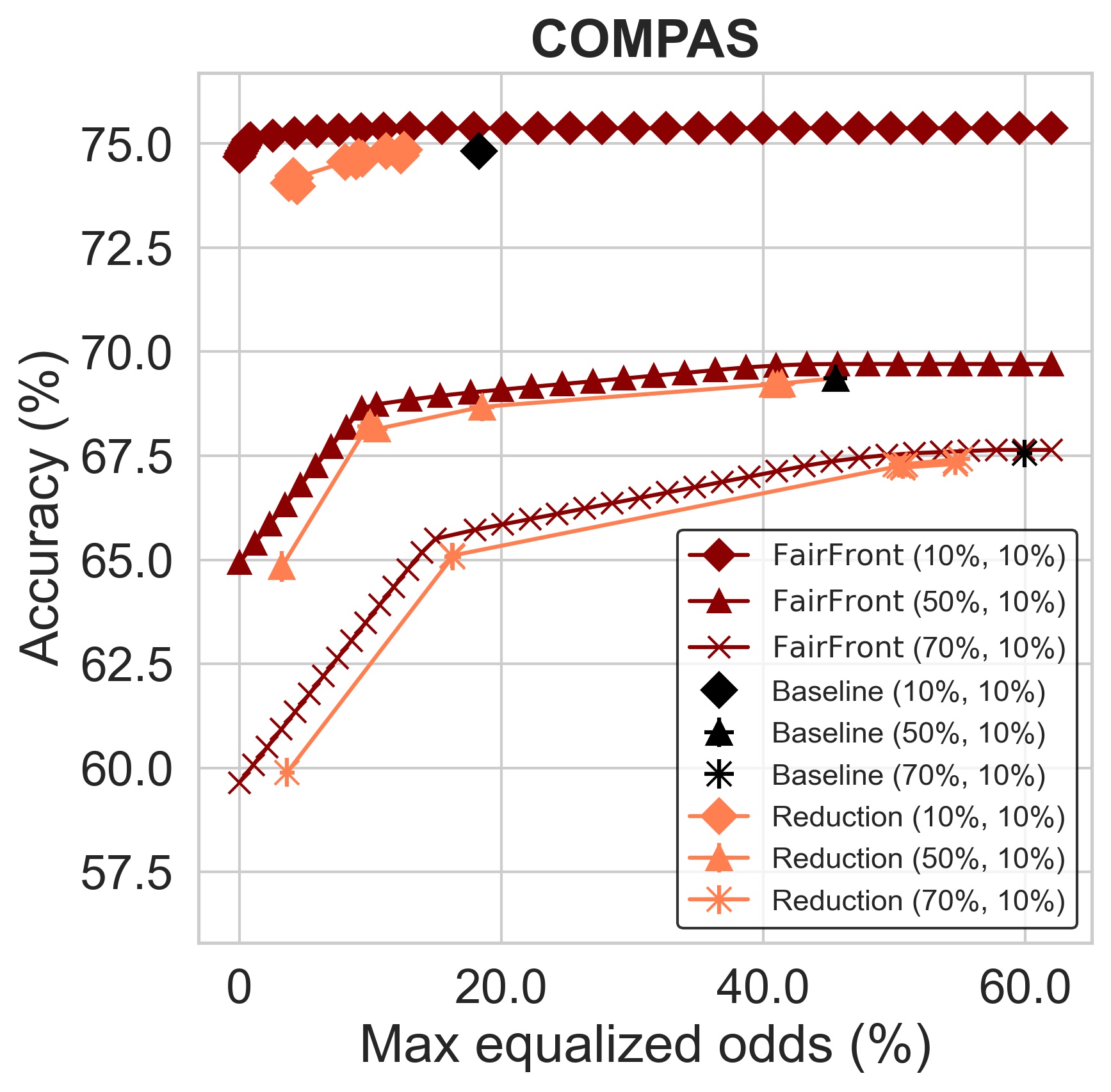}
\caption{Fairness risks of disparate missing patterns. The missing probabilities of group 0 (female in Adult/African-American in COMPAS) and group 1 (male in Adult/Caucasian in COMPAS) are varying among $\{(10\%, 10\%), (50\%, 10\%), (70\%, 10\%)\}$. We apply \texttt{Reduction} and \texttt{Baseline} to the imputed data and plot their fairness-accuracy curves against $\mathsf{FairFront}$. As shown, the effectiveness of fairness interventions substantially decrease with increasing disparate missing patterns in data. 
}
\label{Fig::Reduce_Aleatoric}
\end{figure*}

\paragraph{Results.} As we increase the missing probability of group~0, (our upper bound estimate of) $\mathsf{FairFront}$ decreases since it becomes more difficult to accurately predict outcomes for group 0. This in turn affects the overall model performance, since the fairness constraint requires that the model performs similarly for both groups. We also observe the fairness-accuracy curves of \texttt{Reduction} decrease as the missing data for group 0 become more prevalent. In other words, as the missing data for group~0 increase, it becomes more difficult to maintain both high accuracy and fairness in the model's prediction.

\section{Final Remarks}

The past years have witnessed a growing line of research introducing various group fairness-intervention algorithms. Most of these interventions focus on optimizing model performance subject to group fairness constraints. Though comparing and benchmarking these methods on various datasets is valuable \citep[e.g., see benchmarks in][]{friedler2019comparative,bellamy2019ai,wei2021optimized}, this does not reveal if there is still room for improvement in their fairness-accuracy curves, or if existing methods approach the information-theoretic optimal limit when infinite data is available. Our results address this gap by introducing the fairness Pareto frontier, which measures the highest possible accuracy under a set of group fairness constraints. We precisely characterize the fairness Pareto frontier using Blackwell's conditions and present a greedy improvement algorithm that approximates it from data. Our results show that the fairness-accuracy curves produced by SOTA fairness interventions are very close to the fairness Pareto frontier on standard datasets.

Additionally, we demonstrate that when data are biased due to missing values, the fairness Pareto frontier degrades. Although existing fairness interventions can still reduce performance disparities, they come at the cost of significantly lowering overall model accuracy. The methods we present for computing the fairness Pareto frontier can also be applied to analyze other sources of aleatoric discrimination, such as when individuals may misreport their data or when there are measurement errors. Overall, the fairness Pareto frontier can serve as a valuable framework for guiding data collection and cleaning.
Our results indicate that existing fairness interventions can be effective in reducing epistemic discrimination, and there are diminishing returns in developing new fairness interventions focused solely on optimizing accuracy for a given group fairness constraint on pristine data. However, existing fairness interventions have yet to effectively provide both fair and accurate classification when additional sources of aleatoric discrimination are present (such as missing values in data). This suggests that there is still significant need for research on handling aleatoric sources of discrimination that appear throughout the data collection process.

We provide an in-depth discussion on future work in Appendix~\ref{append:future}.

\clearpage

\section*{Acknowledgement}

\vspace{-2mm}

This material is based upon work supported by the National Science Foundation under grants CAREER 1845852, CIF 2312667, FAI 2040880, CIF 1900750. 

\bibliography{references}

\begin{thebibliography}{}

\bibitem[Agarwal et~al., 2018]{agarwal2018reductions}
Agarwal, A., Beygelzimer, A., Dud{\'\i}k, M., Langford, J., and Wallach, H.
  (2018).
\newblock A reductions approach to fair classification.
\newblock In {\em International Conference on Machine Learning}, pages 60--69.
  PMLR.

\bibitem[Alghamdi et~al., 2020]{alghamdi2020model}
Alghamdi, W., Asoodeh, S., Wang, H., Calmon, F.~P., Wei, D., and Ramamurthy,
  K.~N. (2020).
\newblock Model projection: Theory and applications to fair machine learning.
\newblock In {\em 2020 IEEE International Symposium on Information Theory
  (ISIT)}, pages 2711--2716. IEEE.

\bibitem[Alghamdi et~al., 2022]{alghamdi2022beyond}
Alghamdi, W., Hsu, H., Jeong, H., Wang, H., Michalak, P.~W., Asoodeh, S., and
  Calmon, F.~P. (2022).
\newblock Beyond {Adult} and {COMPAS}: Fair multi-class prediction via
  information projection.
\newblock In {\em Advances in Neural Information Processing Systems}.

\bibitem[Angwin et~al., 2016]{angwin2016machine}
Angwin, J., Larson, J., Mattu, S., and Kirchner, L. (2016).
\newblock Machine bias.
\newblock {\em ProPublica}.

\bibitem[Bache and Lichman, 2013]{bache2013uci}
Bache, K. and Lichman, M. (2013).
\newblock {UCI Machine Learning Repository}.

\bibitem[Bellamy et~al., 2018]{bellamy2018ai}
Bellamy, R.~K., Dey, K., Hind, M., Hoffman, S.~C., Houde, S., Kannan, K.,
  Lohia, P., Martino, J., Mehta, S., Mojsilovic, A., et~al. (2018).
\newblock Ai fairness 360: An extensible toolkit for detecting, understanding,
  and mitigating unwanted algorithmic bias.
\newblock {\em arXiv preprint arXiv:1810.01943}.

\bibitem[Bellamy et~al., 2019]{bellamy2019ai}
Bellamy, R.~K., Dey, K., Hind, M., Hoffman, S.~C., Houde, S., Kannan, K.,
  Lohia, P., Martino, J., Mehta, S., Mojsilovi{\'c}, A., et~al. (2019).
\newblock Ai fairness 360: An extensible toolkit for detecting and mitigating
  algorithmic bias.
\newblock {\em IBM Journal of Research and Development}, 63(4/5):4--1.

\bibitem[Berk et~al., 2021]{berk2021fairness}
Berk, R., Heidari, H., Jabbari, S., Kearns, M., and Roth, A. (2021).
\newblock Fairness in criminal justice risk assessments: The state of the art.
\newblock {\em Sociological Methods \& Research}, 50(1):3--44.

\bibitem[Blackwell, 1951]{blackwell1951comparison}
Blackwell, D. (1951).
\newblock Comparison of experiments.
\newblock {\em Proceedings of the Second Berkeley Symposium on Mathematical
  Statistics and Probability}, pages 93--102.

\bibitem[Blackwell, 1953]{blackwell1953equivalent}
Blackwell, D. (1953).
\newblock Equivalent comparisons of experiments.
\newblock {\em The annals of mathematical statistics}, pages 265--272.

\bibitem[Blodgett et~al., 2020]{blodgett2020language}
Blodgett, S.~L., Barocas, S., Daum{\'e}~III, H., and Wallach, H. (2020).
\newblock Language (technology) is power: A critical survey of" bias" in nlp.
\newblock {\em arXiv preprint arXiv:2005.14050}.

\bibitem[Blum and Stangl, 2019]{blum2019recovering}
Blum, A. and Stangl, K. (2019).
\newblock Recovering from biased data: Can fairness constraints improve
  accuracy?
\newblock {\em arXiv preprint arXiv:1912.01094}.

\bibitem[Calmon et~al., 2017]{calmon2017optimized}
Calmon, F., Wei, D., Vinzamuri, B., Natesan~Ramamurthy, K., and Varshney, K.~R.
  (2017).
\newblock Optimized pre-processing for discrimination prevention.
\newblock {\em Advances in neural information processing systems}, 30.

\bibitem[Cam, 1964]{le1964sufficiency}
Cam, L.~L. (1964).
\newblock Sufficiency and approximate sufficiency.
\newblock {\em The Annals of Mathematical Statistics}, pages 1419--1455.

\bibitem[Caton et~al., 2022]{caton2022impact}
Caton, S., Malisetty, S., and Haas, C. (2022).
\newblock Impact of imputation strategies on fairness in machine learning.
\newblock {\em Journal of Artificial Intelligence Research}, 74:1011--1035.

\bibitem[Celis et~al., 2019]{celis2019classification}
Celis, L.~E., Huang, L., Keswani, V., and Vishnoi, N.~K. (2019).
\newblock Classification with fairness constraints: A meta-algorithm with
  provable guarantees.
\newblock In {\em Proceedings of the conference on fairness, accountability,
  and transparency}, pages 319--328.

\bibitem[Chen et~al., 2018]{chen2018my}
Chen, I., Johansson, F.~D., and Sontag, D. (2018).
\newblock Why is my classifier discriminatory?
\newblock {\em Advances in neural information processing systems}, 31.

\bibitem[Chouldechova, 2017]{chouldechova2017fair}
Chouldechova, A. (2017).
\newblock Fair prediction with disparate impact: A study of bias in recidivism
  prediction instruments.
\newblock {\em Big data}, 5(2):153--163.

\bibitem[Chzhen et~al., 2019]{chzhen2019leveraging}
Chzhen, E., Denis, C., Hebiri, M., Oneto, L., and Pontil, M. (2019).
\newblock Leveraging labeled and unlabeled data for consistent fair binary
  classification.
\newblock {\em Advances in Neural Information Processing Systems}, 32.

\bibitem[Cohen et~al., 1998]{cohen1998comparisons}
Cohen, J., Kempermann, J.~H., and Zbaganu, G. (1998).
\newblock {\em Comparisons of stochastic matrices with applications in
  information theory, statistics, economics and population}.
\newblock Springer Science \& Business Media.

\bibitem[Corbett-Davies et~al., 2017]{corbett2017algorithmic}
Corbett-Davies, S., Pierson, E., Feller, A., Goel, S., and Huq, A. (2017).
\newblock Algorithmic decision making and the cost of fairness.
\newblock In {\em Proceedings of the 23rd acm sigkdd international conference
  on knowledge discovery and data mining}, pages 797--806.

\bibitem[Ding et~al., 2021]{ding2021retiring}
Ding, F., Hardt, M., Miller, J., and Schmidt, L. (2021).
\newblock Retiring adult: New datasets for fair machine learning.
\newblock {\em Advances in neural information processing systems},
  34:6478--6490.

\bibitem[Dutta et~al., 2020]{dutta2020there}
Dutta, S., Wei, D., Yueksel, H., Chen, P.-Y., Liu, S., and Varshney, K. (2020).
\newblock Is there a trade-off between fairness and accuracy? a perspective
  using mismatched hypothesis testing.
\newblock In {\em International Conference on Machine Learning}, pages
  2803--2813. PMLR.

\bibitem[Dwork et~al., 2018]{dwork2018decoupled}
Dwork, C., Immorlica, N., Kalai, A.~T., and Leiserson, M. (2018).
\newblock Decoupled classifiers for group-fair and efficient machine learning.
\newblock In {\em Conference on fairness, accountability and transparency},
  pages 119--133. PMLR.

\bibitem[Feldman et~al., 2015]{feldman2015certifying}
Feldman, M., Friedler, S.~A., Moeller, J., Scheidegger, C., and
  Venkatasubramanian, S. (2015).
\newblock Certifying and removing disparate impact.
\newblock In {\em proceedings of the 21th ACM SIGKDD international conference
  on knowledge discovery and data mining}, pages 259--268.

\bibitem[Fernando et~al., 2021]{fernando2021missing}
Fernando, M.-P., C{\`e}sar, F., David, N., and Jos{\'e}, H.-O. (2021).
\newblock Missing the missing values: The ugly duckling of fairness in machine
  learning.
\newblock {\em International Journal of Intelligent Systems}, 36(7):3217--3258.

\bibitem[Fogliato et~al., 2020]{fogliato2020fairness}
Fogliato, R., Chouldechova, A., and G’Sell, M. (2020).
\newblock Fairness evaluation in presence of biased noisy labels.
\newblock In {\em International Conference on Artificial Intelligence and
  Statistics}, pages 2325--2336. PMLR.

\bibitem[Friedler et~al., 2019]{friedler2019comparative}
Friedler, S.~A., Scheidegger, C., Venkatasubramanian, S., Choudhary, S.,
  Hamilton, E.~P., and Roth, D. (2019).
\newblock A comparative study of fairness-enhancing interventions in machine
  learning.
\newblock In {\em Proceedings of the conference on fairness, accountability,
  and transparency}, pages 329--338.

\bibitem[Globus-Harris et~al., 2023]{globus2023multicalibrated}
Globus-Harris, I., Gupta, V., Jung, C., Kearns, M., Morgenstern, J., and Roth,
  A. (2023).
\newblock Multicalibrated regression for downstream fairness.
\newblock In {\em Proceedings of the 2023 AAAI/ACM Conference on AI, Ethics,
  and Society}, pages 259--286.

\bibitem[Gopalan et~al., 2021]{gopalan2021omnipredictors}
Gopalan, P., Kalai, A.~T., Reingold, O., Sharan, V., and Wieder, U. (2021).
\newblock Omnipredictors.
\newblock {\em arXiv preprint arXiv:2109.05389}.

\bibitem[Hardt et~al., 2016]{hardt2016equality}
Hardt, M., Price, E., and Srebro, N. (2016).
\newblock Equality of opportunity in supervised learning.
\newblock In {\em Advances in Neural Information Processing Systems},
  volume~29.

\bibitem[Horst and Thoai, 1999]{horst1999dc}
Horst, R. and Thoai, N.~V. (1999).
\newblock Dc programming: overview.
\newblock {\em Journal of Optimization Theory and Applications}, 103(1):1--43.

\bibitem[Hort et~al., 2022]{hort2022bia}
Hort, M., Chen, Z., Zhang, J.~M., Sarro, F., and Harman, M. (2022).
\newblock Bia mitigation for machine learning classifiers: A comprehensive
  survey.
\newblock {\em arXiv preprint arXiv:2207.07068}.

\bibitem[Hu et~al., 2023]{hu2023omnipredictors}
Hu, L., Navon, I. R.~L., Reingold, O., and Yang, C. (2023).
\newblock Omnipredictors for constrained optimization.
\newblock In {\em International Conference on Machine Learning}, pages
  13497--13527. PMLR.

\bibitem[H{\"u}llermeier and Waegeman, 2021]{hullermeier2021aleatoric}
H{\"u}llermeier, E. and Waegeman, W. (2021).
\newblock Aleatoric and epistemic uncertainty in machine learning: An
  introduction to concepts and methods.
\newblock {\em Machine Learning}, 110(3):457--506.

\bibitem[Ingels et~al., 2011]{ingels2011high}
Ingels, S.~J., Pratt, D.~J., Herget, D.~R., Burns, L.~J., Dever, J.~A., Ottem,
  R., Rogers, J.~E., Jin, Y., and Leinwand, S. (2011).
\newblock High school longitudinal study of 2009 (hsls: 09): Base-year data
  file documentation. nces 2011-328.
\newblock {\em National Center for Education Statistics}.

\bibitem[Jacobs and Wallach, 2021]{jacobs2021measurement}
Jacobs, A.~Z. and Wallach, H. (2021).
\newblock Measurement and fairness.
\newblock In {\em Proceedings of the 2021 ACM conference on fairness,
  accountability, and transparency}, pages 375--385.

\bibitem[Jeong et~al., 2022]{jeong2022fairness}
Jeong, H., Wang, H., and Calmon, F.~P. (2022).
\newblock Fairness without imputation: A decision tree approach for fair
  prediction with missing values.
\newblock In {\em Proceedings of the AAAI Conference on Artificial
  Intelligence}, volume~36, pages 9558--9566.

\bibitem[Jiang and Nachum, 2020]{jiang2020identifying}
Jiang, H. and Nachum, O. (2020).
\newblock Identifying and correcting label bias in machine learning.
\newblock In {\em International Conference on Artificial Intelligence and
  Statistics}, pages 702--712. PMLR.

\bibitem[Jiang et~al., 2020]{jiang2020wasserstein}
Jiang, R., Pacchiano, A., Stepleton, T., Jiang, H., and Chiappa, S. (2020).
\newblock Wasserstein fair classification.
\newblock In {\em Uncertainty in Artificial Intelligence}, pages 862--872.
  PMLR.

\bibitem[Kallus et~al., 2022]{kallus2022assessing}
Kallus, N., Mao, X., and Zhou, A. (2022).
\newblock Assessing algorithmic fairness with unobserved protected class using
  data combination.
\newblock {\em Management Science}, 68(3):1959--1981.

\bibitem[Kamiran and Calders, 2012]{kamiran2012data}
Kamiran, F. and Calders, T. (2012).
\newblock Data preprocessing techniques for classification without
  discrimination.
\newblock {\em Knowledge and information systems}, 33(1):1--33.

\bibitem[Katzman et~al., 2023]{katzman2023taxonomizing}
Katzman, J., Wang, A., Scheuerman, M., Blodgett, S.~L., Laird, K., Wallach, H.,
  and Barocas, S. (2023).
\newblock Taxonomizing and measuring representational harms: A look at image
  tagging.
\newblock {\em arXiv preprint arXiv:2305.01776}.

\bibitem[Kearns et~al., 2018]{kearns2018preventing}
Kearns, M., Neel, S., Roth, A., and Wu, Z.~S. (2018).
\newblock Preventing fairness gerrymandering: Auditing and learning for
  subgroup fairness.
\newblock In {\em International Conference on Machine Learning}, pages
  2564--2572. PMLR.

\bibitem[Kim et~al., 2020]{kim2020fact}
Kim, J.~S., Chen, J., and Talwalkar, A. (2020).
\newblock Fact: A diagnostic for group fairness trade-offs.
\newblock In {\em International Conference on Machine Learning}, pages
  5264--5274. PMLR.

\bibitem[Kim et~al., 2019]{kim2019multiaccuracy}
Kim, M.~P., Ghorbani, A., and Zou, J. (2019).
\newblock Multiaccuracy: Black-box post-processing for fairness in
  classification.
\newblock In {\em Proceedings of the 2019 AAAI/ACM Conference on AI, Ethics,
  and Society}, pages 247--254.

\bibitem[Kleinberg et~al., 2016]{kleinberg2016inherent}
Kleinberg, J., Mullainathan, S., and Raghavan, M. (2016).
\newblock Inherent trade-offs in the fair determination of risk scores.
\newblock {\em arXiv preprint arXiv:1609.05807}.

\bibitem[Lowy et~al., 2021]{lowy2021fermi}
Lowy, A., Pavan, R., Baharlouei, S., Razaviyayn, M., and Beirami, A. (2021).
\newblock Fermi: Fair empirical risk minimization via exponential {R{\'e}nyi}
  mutual information.
\newblock {\em arXiv preprint arXiv:2102.12586}.

\bibitem[Martinez et~al., 2020]{martinez2020minimax}
Martinez, N., Bertran, M., and Sapiro, G. (2020).
\newblock Minimax pareto fairness: A multi objective perspective.
\newblock In {\em International Conference on Machine Learning}, pages
  6755--6764. PMLR.

\bibitem[Mayson, 2019]{mayson2019bias}
Mayson, S.~G. (2019).
\newblock Bias in, bias out.
\newblock {\em The Yale Law Journal}, 128(8):2218--2300.

\bibitem[Mehrotra and Celis, 2021]{mehrotra2021mitigating}
Mehrotra, A. and Celis, L.~E. (2021).
\newblock Mitigating bias in set selection with noisy protected attributes.
\newblock In {\em Proceedings of the 2021 ACM Conference on Fairness,
  Accountability, and Transparency}, pages 237--248.

\bibitem[Menon and Williamson, 2018]{menon2018cost}
Menon, A.~K. and Williamson, R.~C. (2018).
\newblock The cost of fairness in binary classification.
\newblock In {\em Conference on Fairness, Accountability and Transparency},
  pages 107--118. PMLR.

\bibitem[Pedregosa et~al., 2011]{scikitlearn}
Pedregosa, F., Varoquaux, G., Gramfort, A., Michel, V., Thirion, B., Grisel,
  O., Blondel, M., Prettenhofer, P., Weiss, R., Dubourg, V., Vanderplas, J.,
  Passos, A., Cournapeau, D., Brucher, M., Perrot, M., and Duchesnay, E.
  (2011).
\newblock Scikit-learn: Machine learning in {P}ython.
\newblock {\em Journal of Machine Learning Research}, 12:2825--2830.

\bibitem[Pleiss et~al., 2017]{pleiss2017fairness}
Pleiss, G., Raghavan, M., Wu, F., Kleinberg, J., and Weinberger, K.~Q. (2017).
\newblock On fairness and calibration.
\newblock {\em Advances in neural information processing systems}, 30.

\bibitem[Raginsky, 2011]{raginsky2011shannon}
Raginsky, M. (2011).
\newblock Shannon meets blackwell and le cam: Channels, codes, and statistical
  experiments.
\newblock In {\em 2011 IEEE International Symposium on Information Theory
  Proceedings}, pages 1220--1224. IEEE.

\bibitem[Rauh et~al., 2017]{rauh2017coarse}
Rauh, J., Banerjee, P.~K., Olbrich, E., Jost, J., Bertschinger, N., and
  Wolpert, D. (2017).
\newblock Coarse-graining and the blackwell order.
\newblock {\em Entropy}, 19(10):527.

\bibitem[Schelter et~al., 2019]{schelter2019fairprep}
Schelter, S., He, Y., Khilnani, J., and Stoyanovich, J. (2019).
\newblock Fairprep: Promoting data to a first-class citizen in studies on
  fairness-enhancing interventions.
\newblock {\em arXiv preprint arXiv:1911.12587}.

\bibitem[Shannon, 1958]{shannon1958note}
Shannon, C.~E. (1958).
\newblock A note on a partial ordering for communication channels.
\newblock {\em Information and control}, 1(4):390--397.

\bibitem[Shen et~al., 2016]{shen2016disciplined}
Shen, X., Diamond, S., Gu, Y., and Boyd, S. (2016).
\newblock Disciplined convex-concave programming.
\newblock In {\em 2016 IEEE 55th Conference on Decision and Control (CDC)},
  pages 1009--1014. IEEE.

\bibitem[Subramonian et~al., 2022]{subramoniandiscrimination}
Subramonian, A., Chang, K.-W., and Sun, Y. (2022).
\newblock On the discrimination risk of mean aggregation feature imputation in
  graphs.
\newblock {\em Advances in Neural Information Processing Systems}.

\bibitem[Suresh and Guttag, 2019]{suresh2019framework}
Suresh, H. and Guttag, J.~V. (2019).
\newblock A framework for understanding unintended consequences of machine
  learning.
\newblock {\em arXiv preprint arXiv:1901.10002}, 2:8.

\bibitem[Tomasev et~al., 2021]{tomasev2021fairness}
Tomasev, N., McKee, K.~R., Kay, J., and Mohamed, S. (2021).
\newblock Fairness for unobserved characteristics: Insights from technological
  impacts on queer communities.
\newblock In {\em Proceedings of the 2021 AAAI/ACM Conference on AI, Ethics,
  and Society}, pages 254--265.

\bibitem[Torgersen, 1991]{torgersen1991comparison}
Torgersen, E. (1991).
\newblock {\em Comparison of statistical experiments}, volume~36.
\newblock Cambridge University Press.

\bibitem[Ustun et~al., 2019]{ustun2019fairness}
Ustun, B., Liu, Y., and Parkes, D. (2019).
\newblock Fairness without harm: Decoupled classifiers with preference
  guarantees.
\newblock In {\em International Conference on Machine Learning}, pages
  6373--6382. PMLR.

\bibitem[Varshney, 2021]{varshney2021trustworthy}
Varshney, K.~R. (2021).
\newblock Trustworthy machine learning.
\newblock {\em Chappaqua, NY}.

\bibitem[Verma and Rubin, 2018]{verma2018fairness}
Verma, S. and Rubin, J. (2018).
\newblock Fairness definitions explained.
\newblock In {\em 2018 ieee/acm international workshop on software fairness
  (fairware)}, pages 1--7. IEEE.

\bibitem[Wang et~al., 2021]{wang2021split}
Wang, H., Hsu, H., Diaz, M., and Calmon, F.~P. (2021).
\newblock To split or not to split: The impact of disparate treatment in
  classification.
\newblock {\em IEEE Transactions on Information Theory}, 67(10):6733--6757.

\bibitem[Wang et~al., 2020]{wang2020robust}
Wang, S., Guo, W., Narasimhan, H., Cotter, A., Gupta, M., and Jordan, M.
  (2020).
\newblock Robust optimization for fairness with noisy protected groups.
\newblock {\em Advances in Neural Information Processing Systems},
  33:5190--5203.

\bibitem[Wang and Singh, 2021]{wang2021analyzing}
Wang, Y. and Singh, L. (2021).
\newblock Analyzing the impact of missing values and selection bias on
  fairness.
\newblock {\em International Journal of Data Science and Analytics},
  12(2):101--119.

\bibitem[Wei et~al., 2021]{wei2021optimized}
Wei, D., Ramamurthy, K.~N., and Calmon, F.~P. (2021).
\newblock Optimized score transformation for consistent fair classification.
\newblock {\em J. Mach. Learn. Res.}, 22:258--1.

\bibitem[Wick et~al., 2019]{wick2019unlocking}
Wick, M., Tristan, J.-B., et~al. (2019).
\newblock Unlocking fairness: a trade-off revisited.
\newblock {\em Advances in neural information processing systems}, 32.

\bibitem[Yang et~al., 2020]{yang2020fairness}
Yang, F., Cisse, M., and Koyejo, S. (2020).
\newblock Fairness with overlapping groups; a probabilistic perspective.
\newblock In {\em Advances in Neural Information Processing Systems},
  volume~33, pages 4067--4078.

\bibitem[Zafar et~al., 2019]{zafar2019fairness}
Zafar, M.~B., Valera, I., Gomez-Rodriguez, M., and Gummadi, K.~P. (2019).
\newblock Fairness constraints: A flexible approach for fair classification.
\newblock {\em The Journal of Machine Learning Research}, 20(1):2737--2778.

\bibitem[Zemel et~al., 2013]{zemel2013learning}
Zemel, R., Wu, Y., Swersky, K., Pitassi, T., and Dwork, C. (2013).
\newblock Learning fair representations.
\newblock In {\em International conference on machine learning}, pages
  325--333. PMLR.

\bibitem[Zeng et~al., 2022a]{zeng2022bayes}
Zeng, X., Dobriban, E., and Cheng, G. (2022a).
\newblock Bayes-optimal classifiers under group fairness.
\newblock {\em arXiv preprint arXiv:2202.09724}.

\bibitem[Zeng et~al., 2022b]{zeng2022fair}
Zeng, X., Dobriban, E., and Cheng, G. (2022b).
\newblock Fair {Bayes}-optimal classifiers under predictive parity.
\newblock In {\em Advances in Neural Information Processing Systems}.

\bibitem[Zhang et~al., 2018]{zhang2018mitigating}
Zhang, B.~H., Lemoine, B., and Mitchell, M. (2018).
\newblock Mitigating unwanted biases with adversarial learning.
\newblock In {\em Proceedings of the 2018 AAAI/ACM Conference on AI, Ethics,
  and Society}, pages 335--340.

\bibitem[Zhang and Long, 2021]{zhang2021assessing}
Zhang, Y. and Long, Q. (2021).
\newblock Assessing fairness in the presence of missing data.
\newblock {\em Advances in neural information processing systems},
  34:16007--16019.

\end{thebibliography}
\bibliographystyle{apalike}

\newpage
\appendix
\onecolumn

\section{Technical Background}

In this section, we extend some results in \citet{blackwell1951comparison,blackwell1953equivalent} to our setting. 
For a random variable $\rX$, we denote its probability distribution by $\mathcal{L}(\rX)$. 
A conditional distribution $P_{\rX|\rA}: [n] \to \mathcal{P}(\mathcal{X})$ can be equivalently written as $\bP \defined (P_1,\cdots,P_n)$ where each $P_i = P_{\rX|\rA=i} \in \mathcal{P}(\mathcal{X})$. Let $\mathcal{A}$ be a closed, bounded, convex subset of $\Reals^n$. A decision function is a mapping $\bm{f}: \mathcal{X} \to \mathcal{A}$, which can also be written as $\bm{f}(x) = (a_1(x),\cdots, a_n(x))$. A decision function is associated a loss vector:
\begin{align}
    \bv(\bm{f}) 
    = \left(\int a_1(x) \dif P_1(x),\cdots, \int a_n(x) \dif P_n(x) \right).
\end{align}
The collection of all $\bv(\bm{f})$ is denoted by $\mathcal{B}(P_{\rX|\rA},\mathcal{A})$ or $\mathcal{B}(\bP,\mathcal{A})$.

For a vector $\blambda \in \Delta_n$ such that $\blambda > 0$, we define a function $\bp_{\blambda}(x): \mathcal{X} \to \Delta_n$:
\begin{align}
\label{eq::disc_func}
    \bp_{\blambda}(x) = \left(\frac{\lambda_1\dif P_1}{\lambda_1\dif P_1 + \cdots + \lambda_n \dif P_n} ,\cdots, \frac{\lambda_n\dif P_n}{\lambda_1 \dif P_1 + \cdots + \lambda_n \dif P_n}\right).
\end{align}
Note that $\bp_{\blambda}(\rX)$ is a sufficient statistic for $\rX$, considering $\rA$ as the parameter (it can be proved by Fisher-Neyman factorization theorem). In other words, two Markov chains hold: $\rA - \bp_{\blambda}(\rX) - \rX$ and $\rA - \rX - \bp_{\blambda}(\rX)$ for any distribution on $\rA$.

Consider a new set of probability distributions $\bP^*_{\blambda} \defined (\mathcal{L}(\bp_{\blambda}(\rX_1)), \cdots, \mathcal{L}(\bp_{\blambda}(\rX_n)))$ where $\mathcal{L}(\rX_i) = P_i$. Here $\bP^*_{\blambda}$ can be viewed as a conditional distribution from $[n]$ to $\mathcal{P}(\Delta_n)$ since each $\mathcal{L}(\bp_{\blambda}(\rX_i))$ is a probability distribution over $\Delta_n$. The following lemma follows from the sufficiency of $\bp_{\blambda}(\rX)$. 
\begin{lem}[Adaptation of Theorem~3 in \citet{blackwell1951comparison}]
\label{lem::Bset_suff}
For any $\mathcal{A}$, $\mathcal{B}(\bP,\mathcal{A}) = \mathcal{B}(\bP^*_{\blambda},\mathcal{A})$.
\end{lem}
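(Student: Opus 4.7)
The plan is to prove the two set inclusions separately, relying on the fact that $\bp_{\blambda}(\rX)$ is a sufficient statistic for the family $\{P_1,\ldots,P_n\}$ (which the excerpt notes follows from Fisher--Neyman factorization), together with the convexity of $\mathcal{A}$.

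For the inclusion $\mathcal{B}(\bP^*_{\blambda},\mathcal{A}) \subseteq \mathcal{B}(\bP,\mathcal{A})$, I would take any decision function $\bm{g}:\Delta_n\to\mathcal{A}$ for the experiment $\bP^*_{\blambda}$ and lift it back by composition: define $\bm{f}(x) \defined \bm{g}(\bp_{\blambda}(x))$, which maps $\mathcal{X}\to\mathcal{A}$. By the change-of-variables formula, for each coordinate $i$,
\begin{align*}
\int g_i(\bp)\,\dif\mathcal{L}(\bp_{\blambda}(\rX_i))(\bp) = \int g_i(\bp_{\blambda}(x))\,\dif P_i(x) = \int f_i(x)\,\dif P_i(x),
\end{align*}
so $\bv(\bm{f}) = \bv(\bm{g})$ and the loss vector is realized by the original experiment.

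For the reverse inclusion $\mathcal{B}(\bP,\mathcal{A})\subseteq \mathcal{B}(\bP^*_{\blambda},\mathcal{A})$, which is the core of the statement, I would exploit sufficiency in the form of the Markov chain $\rA - \bp_{\blambda}(\rX) - \rX$. Given $\bm{f}:\mathcal{X}\to\mathcal{A}$, define
\begin{align*}
\bm{g}(\bp) \defined \EE{\bm{f}(\rX)\,\big|\,\bp_{\blambda}(\rX)=\bp},
\end{align*}
where the conditional distribution of $\rX$ given $\bp_{\blambda}(\rX)=\bp$ is well-defined and does not depend on which $P_i$ one conditions under, precisely because $\bp_{\blambda}(\rX)$ is sufficient. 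Convexity (and closedness/boundedness) of $\mathcal{A}$ ensures $\bm{g}(\bp)\in\mathcal{A}$ almost surely, so $\bm{g}$ is a valid decision function for $\bP^*_{\blambda}$. Then by the tower property, for each coordinate $i$,
\begin{align*}
\int g_i(\bp)\,\dif\mathcal{L}(\bp_{\blambda}(\rX_i))(\bp) = \EEE{\rX_i\sim P_i}{\EE{f_i(\rX)\,\big|\,\bp_{\blambda}(\rX)=\bp_{\blambda}(\rX_i)}} = \int f_i(x)\,\dif P_i(x),
\end{align*}
showing $\bv(\bm{g})=\bv(\bm{f})$ and completing the inclusion.

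The main obstacle I anticipate is justifying that the conditional-expectation construction of $\bm{g}$ is well-defined independently of the index $i$; this is exactly the content of sufficiency, so the proof is essentially a matter of carefully writing the Markov-chain identity and invoking convexity of $\mathcal{A}$ to keep $\bm{g}$ in range. Everything else is change of variables and the tower property.
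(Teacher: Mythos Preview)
Your proposal is correct and follows essentially the same approach as the paper: composition with $\bp_{\blambda}$ for the inclusion $\mathcal{B}(\bP^*_{\blambda},\mathcal{A})\subseteq\mathcal{B}(\bP,\mathcal{A})$, and conditional expectation given $\bp_{\blambda}(\rX)$ (well-defined across indices by sufficiency, landing in $\mathcal{A}$ by convexity) plus the tower property for the reverse inclusion. The paper's proof is the same argument with slightly different notation, writing the conditional expectation component-wise as $a_i^*(\bp)=\EE{a_i(\rX_i)\mid\bp_{\blambda}(\rX_i)=\bp}$ and then invoking sufficiency to replace $\rX_i$ by $\rX_1$.
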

\begin{proof}
Suppose that $\bm{f}^*(\bp) = \left(a_1^*(\bp), \cdots, a_n^*(\bp)\right)$ is a decision function for $(\bP^*_{\blambda},\mathcal{A})$. Accordingly, we define $\bm{f}(x) = \left(a_1^*(\bp_{\blambda}(x)), \cdots, a_n(\bp_{\blambda}(x))\right)$ where the function $\bp_{\blambda}$ is defined in \eqref{eq::disc_func}. Then it is clear that $\bm{f}$ is a decision function for $(\bP,\mathcal{A})$. By the law of unconscious statistician, we have
\begin{align}
\label{eq::unc_stat}
\int a_i^*(\bp) \dif P_{_{\blambda},i}^*(\bp)
= \EE{a_i^*(\bp_{\blambda}(\rX_i))}
= \int a_i^*(\bp_{\blambda}(x)) \dif P_i(x).
\end{align}
Hence, $\bv(\bm{f}^*) = \bv(\bm{f})$, which implies $\mathcal{B}(\bP^*_{\blambda},\mathcal{A}) \subseteq \mathcal{B}(\bP,\mathcal{A})$. For the other direction, suppose $\bm{f}(x) = \left(a_1(x), \cdots, a_n(x)\right)$ is a decision function for $(\bP, \mathcal{A})$. Let $\bm{f}^*(\bp) = \left(a_1^*(\bp), \cdots, a_n^*(\bp)\right)$ where $a_i^*(\bp) \defined \EE{a_i(\rX_i)\mid \bp_{\blambda}(\rX_i) = \bp}$. Since $\bp_{\blambda}(\rX)$ is a sufficient statistics, for any $i\in[n]$
\begin{align}
    \mathcal{L}(\rX_i|\bp_{\blambda}(\rX_i)=\bp) = \mathcal{L}(\rX_1|\bp_{\blambda}(\rX_1)=\bp).
\end{align}
Therefore, $\bm{f}^*(\bp) = \EE{\bm{f}(\rX_1)|\bp_{\blambda}(\rX_1) = \bp}$. Since $\mathcal{A}$ is a convex set, $\bm{f}^*$ is a decision function for $(\bP^*,\mathcal{A})$.
By the law of total expectation, we have
\begin{align}
    \int a_i^*(\bp) \dif P_{\blambda,i}^*(\bp)
    = \int a_i(x) \dif P_i(x).
\end{align}
Hence, $\bv(\bm{f}) = \bv(\bm{f}^*)$, which implies $\mathcal{B}(\bP,\mathcal{A}) \subseteq \mathcal{B}(\bP^*_{\blambda},\mathcal{A})$.
\end{proof}

For a vector $\blambda \in \Delta_n$ such that $\blambda > 0$, the condition distribution $P_{\rX|\rA}$ induces a weighted standard measure $P^*_{\blambda} \defined \mathcal{L}\left(\bp_{\blambda}(\bar{\rX})\right)$ where $\mathcal{L}(\bar{\rX}) = \lambda_1 P_1 + \cdots + \lambda_n P_n$.

\begin{thm}[Adaptation of Theorem~4 in \citet{blackwell1951comparison}]
\label{thm::stand_meas}
For any two conditional distributions $P_{\rX|\rA}$ and $Q_{\rY|\rA}$, let $P^*_{\blambda}$ and $Q^*_{\blambda}$ be their weighted standard measures, respectively. Then $\mathcal{B}(P_{\rX|\rA},\mathcal{A}) \supseteq \mathcal{B}(Q_{\rY|\rA},\mathcal{A})$ for any closed, bounded, convex set $\mathcal{A}$ if and only if for any continuous convex $\phi:\Delta_n \to \Reals$, $\int \phi(\bp) \dif P^*_{\blambda}(\bp) \geq \int \phi(\bp) \dif Q^*_{\blambda}(\bp)$
\end{thm}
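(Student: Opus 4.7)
The plan is to use \cref{lem::Bset_suff} to push everything to the standard-measure side, where the comparison reduces to integrals of convex functions on $\Delta_n$. Applying \cref{lem::Bset_suff} to both $\bP$ and $\bQ$ replaces the hypothesis $\mathcal{B}(P_{\rX|\rA},\mathcal{A})\supseteq \mathcal{B}(Q_{\rY|\rA},\mathcal{A})$ by $\mathcal{B}(\bP^*_{\blambda},\mathcal{A})\supseteq \mathcal{B}(\bQ^*_{\blambda},\mathcal{A})$, so both experiments now live on $\Delta_n$. A direct change of variables using \eqref{eq::disc_func} gives the density identity $\dif P^*_{\blambda,i}(\bp) = (p_i/\lambda_i)\,\dif P^*_{\blambda}(\bp)$, so any decision function $\bm{f}=(a_1,\ldots,a_n):\Delta_n\to\mathcal{A}$ produces a loss vector whose $i$-th component equals $\int a_i(\bp)(p_i/\lambda_i)\,\dif P^*_{\blambda}(\bp)$, and analogously for $Q^*_\blambda$.

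The bridge I would establish is that for every $\bc\in\Reals^n$,
\[
\sup_{\bv \in \mathcal{B}(\bP^*_{\blambda},\mathcal{A})} \bc^T\bv \;=\; \int \phi_{\bc,\mathcal{A}}(\bp)\,\dif P^*_{\blambda}(\bp), \qquad \phi_{\bc,\mathcal{A}}(\bp) \defined \sup_{\bm{a}\in\mathcal{A}}\sum_{i=1}^n \frac{c_i a_i p_i}{\lambda_i},
\]
and the analogous identity with $Q$ in place of $P$. The function $\phi_{\bc,\mathcal{A}}$ is continuous and convex on $\Delta_n$ as a supremum of affine functions over the compact set $\mathcal{A}$, so it is a legitimate test function for the integral inequality.

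For the ``only if'' direction, I would use that any continuous convex $\phi:\Delta_n\to\Reals$ equals the supremum of its supporting affine functions; since $\sum_i p_i = 1$ on the simplex, each such affine function can be rewritten as a linear function, so $\phi(\bp)=\sup_{\bm{a}\in\mathcal{A}_\phi}\bm{a}^T\bp$ for some closed bounded convex $\mathcal{A}_\phi\subset\Reals^n$. Choosing $\bc=\blambda$ makes $\phi=\phi_{\blambda,\mathcal{A}_\phi}$. The bridge identity together with the set inclusion $\mathcal{B}(\bP^*_\blambda,\mathcal{A}_\phi)\supseteq \mathcal{B}(\bQ^*_\blambda,\mathcal{A}_\phi)$ immediately yields $\int\phi\,\dif P^*_\blambda \geq \int\phi\,\dif Q^*_\blambda$. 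For the ``if'' direction, I would argue by contradiction: if some $\bv\in\mathcal{B}(\bQ^*_\blambda,\mathcal{A})\setminus\mathcal{B}(\bP^*_\blambda,\mathcal{A})$ exists, then Hahn--Banach separates $\bv$ from the closed convex set $\mathcal{B}(\bP^*_\blambda,\mathcal{A})$ by some $\bc$. Applying the bridge identity to this $\bc$ on both sides produces a continuous convex $\phi_{\bc,\mathcal{A}}$ violating the integral inequality.

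The main obstacle is justifying the bridge identity, specifically the interchange
\[
\sup_{\bm{f}:\Delta_n\to\mathcal{A}} \int \sum_i \frac{c_i a_i(\bp)p_i}{\lambda_i}\,\dif P^*_\blambda(\bp) \;=\; \int \sup_{\bm{a}\in\mathcal{A}} \sum_i \frac{c_i a_i p_i}{\lambda_i}\,\dif P^*_\blambda(\bp).
\]
The ``$\leq$'' direction is immediate from pointwise domination; the reverse direction requires a measurable selection argument (e.g., Kuratowski--Ryll-Nardzewski applied to the $\epsilon$-optimal set-valued map $\bp\mapsto \{\bm{a}\in\mathcal{A} : \sum_i c_i a_i p_i/\lambda_i \geq \phi_{\bc,\mathcal{A}}(\bp)-\epsilon\}$) to produce near-optimal measurable decision functions, followed by letting $\epsilon\downarrow 0$. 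A secondary technicality is the closedness of $\mathcal{B}(\bP^*_\blambda,\mathcal{A})$ needed for the separation step, which follows from compactness of $\mathcal{A}$ and a standard weak compactness argument on the space of $\mathcal{A}$-valued measurable maps.
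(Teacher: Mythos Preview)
Your proposal is correct and shares the paper's opening move---invoking \cref{lem::Bset_suff} to pass to $\bP^*_\blambda$ and $\bQ^*_\blambda$---but thereafter the arguments diverge in a way worth noting. The paper does not set up your general bridge identity with an arbitrary direction $\bc$; instead it fixes $\bc=\ones$ (equivalently, looks at $\min_{\bv}\sum_j v_j$) and \emph{restricts to polytopes} $\mathcal{A}=\mathsf{conv}(\ba_1,\dots,\ba_k)$. On a polytope the pointwise minimizer $\bm{f}^*(\bp)=\argmin_{\ba\in\mathcal{A}}\bp^T\bLambda^{-1}\ba$ can be chosen to take values in the finite vertex set, which makes the sup--integral interchange immediate and gives the closed form $\min_{\bv}\sum_j v_j=\int\min_i\{\ba_i^T\bLambda^{-1}\bp\}\,\dif P^*_\blambda(\bp)$. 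For the equivalence between the set inclusion $\mathcal{B}(P,\mathcal{A})\supseteq\mathcal{B}(Q,\mathcal{A})$ for all $\mathcal{A}$ and the scalar inequality on $\sum_j v_j$, the paper simply cites Theorem~2.(3) of \citet{blackwell1951comparison} rather than invoking Hahn--Banach. Two approximation steps (polytopes suffice among all $\mathcal{A}$; piecewise-linear maxima suffice among continuous convex $\phi$) finish the proof.

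What each approach buys: the paper's route sidesteps both of the technicalities you flag---no measurable selection is needed because optimal decisions over a polytope live in a finite set, and no separate closedness/separation argument is needed because Blackwell's earlier theorem is taken off the shelf. Your route is more self-contained (you re-prove the separation direction rather than cite it) and conceptually cleaner in that the bridge identity with general $\bc$ makes the duality structure explicit, but you pay for this with Kuratowski--Ryll-Nardzewski and a compactness argument for $\mathcal{B}(\bP^*_\blambda,\mathcal{A})$. One small caution in your ``only if'' step: writing $\phi(\bp)=\sup_{\ba\in\mathcal{A}_\phi}\ba^T\bp$ with $\mathcal{A}_\phi$ \emph{bounded} can fail if $\phi$ has unbounded subgradients at the boundary of $\Delta_n$; you would need an approximation by Lipschitz convex functions there, which is exactly the approximation step the paper also invokes.
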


\begin{proof}
First, by Lemma~\ref{lem::Bset_suff}, we know
$\mathcal{B}(P_{\rX|\rA},\mathcal{A}) = \mathcal{B}(\bP^*_{\blambda},\mathcal{A})$ and $\mathcal{B}(Q_{\rY|\rA},\mathcal{A}) = \mathcal{B}(\bQ^*_{\blambda},\mathcal{A})$. We denote $\bLambda = \diag(\lambda_1,\cdots,\lambda_n)$. Consider any $\mathcal{A} = \mathsf{conv}(\ba_1, \cdots, \ba_k)$. Let 
\begin{align}
\label{eq::defn_fstar}
    \bm{f}^*(\bp) = \argmin_{\ba \in \mathcal{A}}\bp^T \bLambda^{-1} \ba.
\end{align}
Note that $\bm{f}^*(\bp) \in \{\ba_1,\cdots,\ba_k\}$ since this set contains all the extreme points of $\mathcal{A}$.\footnote{If \eqref{eq::defn_fstar} has multiple optimal solutions, we always choose the one from $\{\ba_1,\cdots,\ba_k\}$.} By definition, for any decision function w.r.t. $(\bP^*_{\blambda},\mathcal{A})$, we have
\begin{align}
    \bp^T \bLambda^{-1} \bm{f}(\bp) 
    \geq \bp^T \bLambda^{-1} \bm{f}^*(\bp),\quad \forall \bp.
\end{align}
Let $\bv = \bv(\bm{f})$. By the same reason with \eqref{eq::unc_stat}, we have
\begin{align}
    v_j 
    &= \int a_j(\bp_{\blambda}(x)) \dif P_j(x)\\
    &= \frac{1}{\lambda_j}\int a_j(\bp_{\blambda}(x)) \frac{\lambda_j\dif P_j}{\lambda_1\dif P_1 + \cdots + \lambda_n\dif P_n}(x) (\lambda_1 \dif P_1 + \cdots + \lambda_n\dif P_n)(x)\\
    &= \frac{1}{\lambda_j}\int a_j(\bp_{\blambda}(x)) [\bp_{\blambda}(x)]_j (\lambda_1 \dif P_1 + \cdots + \lambda_n\dif P_n)(x)\\
    &= \frac{1}{\lambda_j}\EE{a_j(\bp_{\blambda}(\bar{\rX})) [\bp_{\blambda}(\bar{\rX})]_j}\\
    &= \frac{1}{\lambda_j}\int a_j(\bp) p_j \dif P^*_{\blambda}(\bp),
\end{align}
where the last step is due to the law of unconscious statistician. Therefore,
\begin{align}
    \sum_{j=1}^n v_j
    &= \int \bp^T \bLambda^{-1} \bm{f}(\bp) \dif P^*_{\blambda}(\bp)\\
    &\geq \int \bp^T \bLambda^{-1} \bm{f}^*(\bp) \dif P^*_{\blambda}(\bp)\\
    &= \int \min_{i}\{ \bp^T \bLambda^{-1} \ba_i\} \dif P^*_{\blambda}(\bp).
\end{align}
The equality is achieved by $\bv(\bm{f}^*)$. Hence, for any $\mathcal{A} = \mathsf{conv}(\ba_1, \cdots, \ba_k)$
\begin{align}
\label{eq::min_v_rewrite}
    \min_{\bv \in \mathcal{B}(P_{\rX|\rA},\mathcal{A})} \sum_{j=1}^n v_j
    = \int \min_{i}\{\ba_i^T \bLambda^{-1} \bp\} \dif P^*_{\blambda}(\bp).
\end{align}
Recall that Theorem~2.(3) in \citet{blackwell1951comparison} states
\begin{align*}
    &\mathcal{B}(P_{\rX|\rA},\mathcal{A}) \supseteq \mathcal{B}(P_{\rY|\rA},\mathcal{A})\quad \text{for every closed, bounded, convex }\mathcal{A}\\
    \Leftrightarrow
    &\min_{\bv \in \mathcal{B}(P_{\rX|\rA},\mathcal{A})} \sum_{j=1}^n v_j
    \leq \min_{\bv \in \mathcal{B}(P_{\rY|\rA},\mathcal{A})} \sum_{j=1}^n v_j \quad \text{for every closed, bounded, convex }\mathcal{A}.
\end{align*}
By approximation theory, the second condition can be relaxed to any $\mathcal{A}$ that is a convex hull of a finite set. By \eqref{eq::min_v_rewrite}, this relaxed condition is equivalent to 
\begin{align}
    \int \phi(\bp) \dif P^*_{\blambda}(\bp)
    \geq \int \phi(\bp) \dif Q^*_{\blambda}(\bp)
\end{align}
for all $\phi(\bp)$ that are the \textbf{maximum} of finitely many linear functions. By approximation theory again, the above condition is equivalent to the one holding for any continuous convex function $\phi$.
\end{proof}

\section{Omitted Proofs}

\subsection{Proof of Lemma~\ref{lem::prop_conv_C}}

\begin{proof}
Clearly, $\mathcal{C}$ is a subset of $\mathcal{T}(C|AC)$. Let $\lambda \in (0,1)$ and $P_{\hat{\rY}_0|\rS,\rY}, P_{\hat{\rY}_1|\rS,\rY} \in \mathcal{C}$. Now we introduce a Bernoulli random variable $\rB$ such that $\Pr(\rB = 0) = \lambda$. Finally, we define $\hat{\rY}_{\lambda} = \rB \hat{\rY}_1 + (1-\rB) \hat{\rY}_0$. By definition, we have $(\rS,\rY) - \rX - \hat{\rY}_{\lambda}$ so $P_{\hat{\rY}_\lambda|\rS,\rY} \in \mathcal{C}$. Moreover, 
\begin{align*}
    P_{\hat{\rY}_\lambda|\rS,\rY}
    = \lambda P_{\hat{\rY}_0|\rS,\rY} + (1-\lambda) P_{\hat{\rY}_1|\rS,\rY}.
\end{align*}
Hence, $\mathcal{C}$ is convex.

Let $\lambda \in (0,1)$. Assume $\bP$ and $\bar{\bP}$ achieve the maximal values of Proposition~\ref{prop::FATO_linear} under $(\alpha_{\scalebox{.5}{\textnormal SP}}, \alpha_{\scalebox{.5}{\textnormal EO}}, \alpha_{\scalebox{.5}{\textnormal OAE}})$ and $(\bar{\alpha}_{\scalebox{.5}{\textnormal SP}}, \bar{\alpha}_{\scalebox{.5}{\textnormal EO}}, \bar{\alpha}_{\scalebox{.5}{\textnormal OAE}})$, respectively. We define $\bP_{\lambda} = \lambda \bP + (1-\lambda) \bar{\bP}$, which satisfies the constraints of Proposition~\ref{prop::FATO_linear} with thresholds $(\lambda \alpha_{\scalebox{.5}{\textnormal SP}} + (1-\lambda)\bar{\alpha}_{\scalebox{.5}{\textnormal SP}}, \lambda \alpha_{\scalebox{.5}{\textnormal EO}} + (1-\lambda)\bar{\alpha}_{\scalebox{.5}{\textnormal EO}}, \lambda \alpha_{\scalebox{.5}{\textnormal OAE}} + (1-\lambda)\bar{\alpha}_{\scalebox{.5}{\textnormal OAE}})$. Finally, since the objective function of Proposition~\ref{prop::FATO_linear} is a linear function, it is equal to $\lambda \mathsf{FairFront}(\alpha_{\scalebox{.5}{\textnormal SP}}, \alpha_{\scalebox{.5}{\textnormal EO}}, \alpha_{\scalebox{.5}{\textnormal OAE}}) + (1-\lambda) \mathsf{FairFront}(\bar{\alpha}_{\scalebox{.5}{\textnormal SP}}, \bar{\alpha}_{\scalebox{.5}{\textnormal EO}}, \bar{\alpha}_{\scalebox{.5}{\textnormal OAE}})$ under $\bP_{\lambda}$. 
\end{proof}

\subsection{Proof of Theorem~\ref{thm::chara_FAT_set}}

\begin{proof}
The proof relies on Theorem~\ref{thm::stand_meas} and Lemma~\ref{lem::blackwell_eqv}. For simplicity, we write the conditional $P_{\hat{\rY}|\rS,\rY}$ as its corresponding transition matrix $\bP$. Let $\bmu = (\Pr(\rS=1,\rY=1),\cdots,\Pr(\rS=A,\rY=C))$. The function \eqref{eq::disc_func} in our setting can be written as:
\begin{align}
    \bp_{\bmu}(\hat{y})
    &= \left(\frac{\mu_{1,1} P_{(1,1),\hat{y}}}{\sum_{s,y} \mu_{s,y} P_{(s,y),\hat{y}}}, \cdots, \frac{\mu_{A,C} P_{(A,C),\hat{y}}}{\sum_{s,y} \mu_{s,y} P_{(s,y),\hat{y}}}\right).\\
    \bp_{\bmu}(x)
    &= \left(\frac{\mu_{1,1}\dif P_{\rX|\rS=1,\rY=1}}{\sum_{s,y} \mu_{s,y}\dif P_{\rX|\rS=s,\rY=y}}(x),\cdots,\frac{\mu_{A,C}\dif P_{\rX|\rS=A,\rY=C}}{\sum_{s,y} \mu_{s,y} \dif P_{\rX|\rS=s,\rY=y}}(x)\right).
\end{align}
Note that $\bp_{\bmu}(x) = \bg(x)$ due to Bayes' rule. By Lemma~\ref{lem::blackwell_eqv}, we can rewrite $\mathcal{C}$ in Definition~\ref{defn::achi_trans_mat} as
\begin{align}
    \mathcal{C}
    = \left\{\bP \mid P_{\hat{\rX}|\rS,\rY} \text{ is more informative than }\bP\right\}.
\end{align}
By Lemma~\ref{lem::blackwell_eqv} and Theorem~\ref{thm::stand_meas}, the above set is further equivalent to all transition matrices $\bP\in \mathcal{T}(C|AC)$ satisfying 
\begin{align}
\label{eq::phi_cha_trans}
    \sum_{\hat{y}=1}^C \phi\left(\frac{\mu_{1,1}P_{(1,1),\hat{y}}}{\sum_{s,y} \mu_{s,y} P_{(s,y),\hat{y}}}, \cdots, \frac{\mu_{A,C}P_{(A,C),\hat{y}}}{\sum_{s,y} \mu_{s,y} P_{(s,y),\hat{y}}}\right) \sum_{s,y} \mu_{s,y} P_{(s,y),\hat{y}}
    \leq \EE{\phi(\bg(\rX))}
\end{align}
for any function $\phi: \Delta_{AC} \to \Reals$ which is the maximum of finitely many linear functions. Now we can write $\phi(\bp) = \max_{i\in[k]} \left\{\ba_i^T \bp\right\}$---we ignore the bias term because $\ba_i^T \bp + b_i = (\ba_i + b_i \ones)^T \bp$. Then the inequality in \eqref{eq::phi_cha_trans} can be simplified as
\begin{align}
    \sum_{\hat{y}=1}^C \max_{i\in[k]} \left\{\ba_i^T\bLambda_{\mu} \bp_{\hat{y}}\right\}
    \leq \EE{\max_{i\in [k]}\{\ba_i^T \bg(\rX)\}},
\end{align}
where $\bp_{\hat{y}}$ is the $\hat{y}$-th column of $\bP$ and $\bLambda_{\mu}=\diag(\mu_{1,1},\cdots,\mu_{A,C})$. Finally, we can always normalize the above inequality so that each $\ba_i \in [-1,1]^{AC}$.
\end{proof}

\subsection{Proof of Theorem~\ref{thm::convergence}}
\begin{proof}
We denote
\begin{align*}
    f(\bP)
    &\defined \sum_{s=1}^A\sum_{y=1}^C \mu_{s,y} P_{(s,y),y},\\
    g(\bP;\ba_1,\cdots,\ba_k)
    &\defined \sum_{\hat{y}=1}^C \max_{i\in[k]} \left\{\ba_i^T\bLambda_{\mu} \bp_{\hat{y}}\right\}
    - \EE{\max_{i\in [k]}\{\ba_i^T \bg(\rX)\}},\\
    \mathcal{F}
    &\defined \mathcal{C}_k \cap \left\{\bP\in \mathcal{T}(C|AC)\mid \mathsf{SP} \leq \alpha_{\scalebox{.5}{\textnormal SP}}, \mathsf{EO} \leq \alpha_{\scalebox{.5}{\textnormal EO}}, \mathsf{OAE} \leq \alpha_{\scalebox{.5}{\textnormal OAE}}\right\}.
\end{align*}
Let $\mathcal{F}^t$ be the constraint set of $\bP$ at the $t$-th iteration of our algorithm. Note that $\mathcal{F} \subseteq \mathcal{F}^t$ by definition. If the algorithm stops at the $t$-th iteration, then for any $\{\ba_i \mid \ba_i \in [-1,1]^{AC}, i\in[k]\}$, $\bP^t$ satisfies
\begin{align*}
    g(\bP^t;\ba_1,\cdots,\ba_k) \leq 0,
\end{align*}
which implies $\bP^t \in \mathcal{F}$. Consequently, 
\begin{align*}
    f(\bP^t) 
    = \max_{\bP\in \mathcal{F}^t} f(\bP)
    \geq \max_{\bP\in \mathcal{F}} f(\bP)
    \geq f(\bP^t).
\end{align*}
As a result, $f(\bP^t) = \max_{\bP\in \mathcal{F}} f(\bP)$ so $\bP^t$ is an optimal solution of $\mathsf{FairFront}_k(\alpha_{\scalebox{.5}{\textnormal SP}}, \alpha_{\scalebox{.5}{\textnormal EO}}, \alpha_{\scalebox{.5}{\textnormal OAE}})$.

If the algorithm never stops, consider any convergent sub-sequence of $\bP^t$ that converges to a limit point $\bP^*\in\mathcal{T}(C|AC)$. To simplify our notation, we assume $\bP^t \to \bP^*$ as $t\to \infty$. Since $\{\mathcal{F}^t\}_{t\geq 1}$ is non-increasing and they all contain $\mathcal{F}$, there exists a set $\mathcal{F}^*$ such that 
\begin{align*}
    \lim_{t\to \infty}~\mathcal{F}^t
    = \mathcal{F}^*, \quad \mathcal{F} \subseteq \mathcal{F}^*.
\end{align*}
Therefore, we have
\begin{align*}
    f(\bP^*)
    = \lim_{t\to\infty}f(\bP^t)
    = \lim_{t\to\infty} \max_{\bP \in \mathcal{F}^t}~f(\bP)
    = \max_{\bP \in \mathcal{F}^*}~f(\bP).
\end{align*}
Since $\mathcal{F} \subseteq \mathcal{F}^*$, we have 
\begin{align*}
    f(\bP^*)
    =\max_{\bP \in \mathcal{F}^*}~f(\bP)
    \geq \max_{\bP \in \mathcal{F}}~f(\bP).
\end{align*}
If $\bP^* \not\in \mathcal{F}$, then there exists a $(\bar{\ba}_1,\cdots,\bar{\ba}_k)$, such that $g(\bP^*;\bar{\ba}_1,\cdots,\bar{\ba}_k) > 0$. Let $(\ba_{1,t},\cdots,\ba_{k,t})$ be the output of Step 2 at $t$-th iteration. Since $\bP^* \in \mathcal{F}^t$ for all $t$, we have
\begin{align}
\label{eq::gleq0}
    g(\bP^*;\ba_{1,t},\cdots,\ba_{k,t}) \leq 0.
\end{align}
By the optimality of $(\ba_{1,t},\cdots,\ba_{k,t})$, we have
\begin{align}
\label{eq::ine_g_opt_a}
    g(\bP^t;\ba_{1,t},\cdots,\ba_{k,t})
    \geq g(\bP^t;\bar{\ba}_{1},\cdots,\bar{\ba}_{k}). 
\end{align}
Suppose that some sub-sequence of $(\ba_{1,t},\cdots,\ba_{k,t})$ converges to a vector $(\ba_{1}^*,\cdots,\ba_{k}^*)$. For the sake of simplicity, we assume $(\ba_{1,t},\cdots,\ba_{k,t}) \to (\ba_{1}^*,\cdots,\ba_{k}^*)$ as $t\to \infty$. On the one hand, taking limit of $t\to\infty$ on both sides of \eqref{eq::ine_g_opt_a} leads to
\begin{align*}
    g(\bP^*;\ba_{1}^*,\cdots,\ba_{k}^*)
    \geq g(\bP^*;\bar{\ba}_{1},\cdots,\bar{\ba}_{k}). 
\end{align*}
On the other hand, taking limit of $t\to\infty$ on both sides of \eqref{eq::gleq0} leads to
\begin{align*}
    g(\bP^*;\ba_{1}^*,\cdots,\ba_{k}^*) \leq 0.
\end{align*}
Therefore,
\begin{align*}
    0
    \geq g(\bP^*;\ba_{1}^*,\cdots,\ba_{k}^*)
    \geq g(\bP^*;\bar{\ba}_{1},\cdots,\bar{\ba}_{k})
    >0,
\end{align*}
which is impossible. Therefore, $\bP^* \in \mathcal{F}$ and, as a result, we have
\begin{align*}
    f(\bP^*)
    =\max_{\bP \in \mathcal{F}^*}~f(\bP)
    \geq \max_{\bP \in \mathcal{F}}~f(\bP)
    \geq f(\bP^*)
    \implies \max_{\bP \in \mathcal{F}}~f(\bP)
    = f(\bP^*).
\end{align*}
\end{proof}

\subsection{Additional Results}

We establish basic properties of $\mathcal{C}$ and $\mathsf{FairFront}(\alpha_{\scalebox{.5}{\textnormal SP}}, \alpha_{\scalebox{.5}{\textnormal EO}}, \alpha_{\scalebox{.5}{\textnormal OAE}})$ in the following lemma. 
\begin{lem}
\label{lem::prop_conv_C}
$\mathcal{C}$ is a convex subset of $\mathcal{T}(C|AC)$ and $\mathsf{FairFront}(\alpha_{\scalebox{.5}{\textnormal SP}}, \alpha_{\scalebox{.5}{\textnormal EO}}, \alpha_{\scalebox{.5}{\textnormal OAE}})$ is a concave function w.r.t. $\alpha_{\scalebox{.5}{\textnormal SP}}, \alpha_{\scalebox{.5}{\textnormal EO}}, \alpha_{\scalebox{.5}{\textnormal OAE}}$. Here the constants $A$ and $C$ denote the number of protected groups and the number of classes. 
\end{lem}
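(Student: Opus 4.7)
The plan is to prove the two claims separately, with convexity of $\mathcal{C}$ doing the heavy lifting so that concavity of $\mathsf{FairFront}$ follows from a standard convex-optimization argument via Proposition~\ref{prop::FATO_linear}.

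For convexity of $\mathcal{C}$, I would argue by time-sharing over two classifiers. Fix $\lambda\in[0,1]$ and take any $P_{\hat{\rY}_0|\rS,\rY}, P_{\hat{\rY}_1|\rS,\rY}\in\mathcal{C}$, arising from classifiers whose outputs satisfy the Markov chains $(\rS,\rY)-\rX-\hat{\rY}_0$ and $(\rS,\rY)-\rX-\hat{\rY}_1$ respectively. Introduce an auxiliary Bernoulli variable $\rB$ with $\Pr(\rB=0)=\lambda$, drawn independently of $(\rS,\rX,\rY,\hat{\rY}_0,\hat{\rY}_1)$, and set $\hat{\rY}_\lambda = \rB\hat{\rY}_1 + (1-\rB)\hat{\rY}_0$. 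Because $\rB$ is independent of $(\rS,\rY)$ given $\rX$, the Markov chain $(\rS,\rY)-\rX-\hat{\rY}_\lambda$ still holds, so $P_{\hat{\rY}_\lambda|\rS,\rY}\in\mathcal{C}$. Conditioning on $\rB$ and applying the law of total probability gives the identity $P_{\hat{\rY}_\lambda|\rS,\rY} = \lambda P_{\hat{\rY}_0|\rS,\rY} + (1-\lambda) P_{\hat{\rY}_1|\rS,\rY}$, so the arbitrary convex combination of two elements of $\mathcal{C}$ lies in $\mathcal{C}$.

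For concavity of $\mathsf{FairFront}$, I would pass to the reformulation in Proposition~\ref{prop::FATO_linear}. The objective $\sum_{s,y}\mu_{s,y}P_{(s,y),y}$ is linear in $\bP$, and inspecting Table~\ref{tabel:FairMetric} shows that each fairness constraint is of the form $|L(\bP)|\le\alpha$ for some affine function $L$ of $\bP$, hence jointly convex in $(\bP,\alpha)$. Now, given two threshold triples $(\alpha_{\scalebox{.5}{\textnormal SP}},\alpha_{\scalebox{.5}{\textnormal EO}},\alpha_{\scalebox{.5}{\textnormal OAE}})$ and $(\bar\alpha_{\scalebox{.5}{\textnormal SP}},\bar\alpha_{\scalebox{.5}{\textnormal EO}},\bar\alpha_{\scalebox{.5}{\textnormal OAE}})$ with respective optimizers $\bP$ and $\bar\bP$, set $\bP_\lambda = \lambda\bP + (1-\lambda)\bar\bP$. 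The just-proven convexity of $\mathcal{C}$ ensures $\bP_\lambda\in\mathcal{C}$, and applying $|\lambda x+(1-\lambda)y|\le\lambda|x|+(1-\lambda)|y|$ inside each fairness inequality shows that $\bP_\lambda$ is feasible for the convex combination of thresholds. Linearity of the objective then gives
\begin{equation*}
\lambda\,\mathsf{FairFront}(\alpha_{\scalebox{.5}{\textnormal SP}},\alpha_{\scalebox{.5}{\textnormal EO}},\alpha_{\scalebox{.5}{\textnormal OAE}}) + (1-\lambda)\,\mathsf{FairFront}(\bar\alpha_{\scalebox{.5}{\textnormal SP}},\bar\alpha_{\scalebox{.5}{\textnormal EO}},\bar\alpha_{\scalebox{.5}{\textnormal OAE}}) \le \mathsf{FairFront}(\lambda\alpha_{\scalebox{.5}{\textnormal SP}}+(1-\lambda)\bar\alpha_{\scalebox{.5}{\textnormal SP}}, \lambda\alpha_{\scalebox{.5}{\textnormal EO}}+(1-\lambda)\bar\alpha_{\scalebox{.5}{\textnormal EO}}, \lambda\alpha_{\scalebox{.5}{\textnormal OAE}}+(1-\lambda)\bar\alpha_{\scalebox{.5}{\textnormal OAE}}),
\end{equation*}
which is precisely concavity.

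There is no real obstacle here; the only point requiring mild care is the clean verification that $\bP_\lambda$ is feasible for the mixed thresholds, which is where the absolute-value triangle inequality inside the $\mathsf{SP},\mathsf{EO},\mathsf{OAE}$ expressions is used. Both statements are essentially consequences of the fact that randomizing classifiers preserves the $(\rS,\rY)-\rX-\hat{\rY}$ structure and that the reformulated optimization problem in Proposition~\ref{prop::FATO_linear} is a convex program with a linear objective.
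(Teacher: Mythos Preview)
Your proposal is correct and follows essentially the same approach as the paper: both prove convexity of $\mathcal{C}$ via the time-sharing construction $\hat{\rY}_\lambda = \rB\hat{\rY}_1 + (1-\rB)\hat{\rY}_0$ with an independent Bernoulli $\rB$, and then derive concavity of $\mathsf{FairFront}$ by taking the convex combination $\bP_\lambda = \lambda\bP + (1-\lambda)\bar\bP$ of two optimizers in the reformulation of Proposition~\ref{prop::FATO_linear}. Your write-up is in fact slightly more explicit than the paper's, spelling out the independence of $\rB$ and the triangle-inequality step used to verify that $\bP_\lambda$ satisfies the mixed fairness thresholds.
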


Next, we discuss a special case---$\rX$ is discrete---under which $\mathcal{C}$ has a simple characterization.
\begin{rem}
\label{rem:disc_X}
If $\rX$ is a \emph{discrete} variable with a \emph{finite support} $[D]$, we can write $P_{\rX|\rS,\rY}$ as a transition matrix $\boldsymbol\Phi \in \mathcal{T}(D|AC)$. By introducing an auxiliary variable $\bM \in \mathcal{T}(C|D)$, we can write $\bP \in \mathcal{C}$ equivalently as linear constraints: $\bP = \boldsymbol\Phi \bM$ by using the last condition of Lemma~\ref{lem::blackwell_eqv}. Consequently, Proposition~\ref{prop::FATO_linear} boils down to a linear program. However, this characterization fails to generalize to continuous data because $\boldsymbol\Phi$ and $\bM$ will have an infinite dimension; for categorical data, this characterization suffers from the curse of dimensionality since the support size of $\rX$ grows exponentially fast w.r.t. the number of features.
\end{rem}

\section{Details on the Experimental Results}
\label{sec::append_exp}

\subsection{Additional Experiments}

\begin{figure*}[t]
\centering
\includegraphics[width=0.40\linewidth]{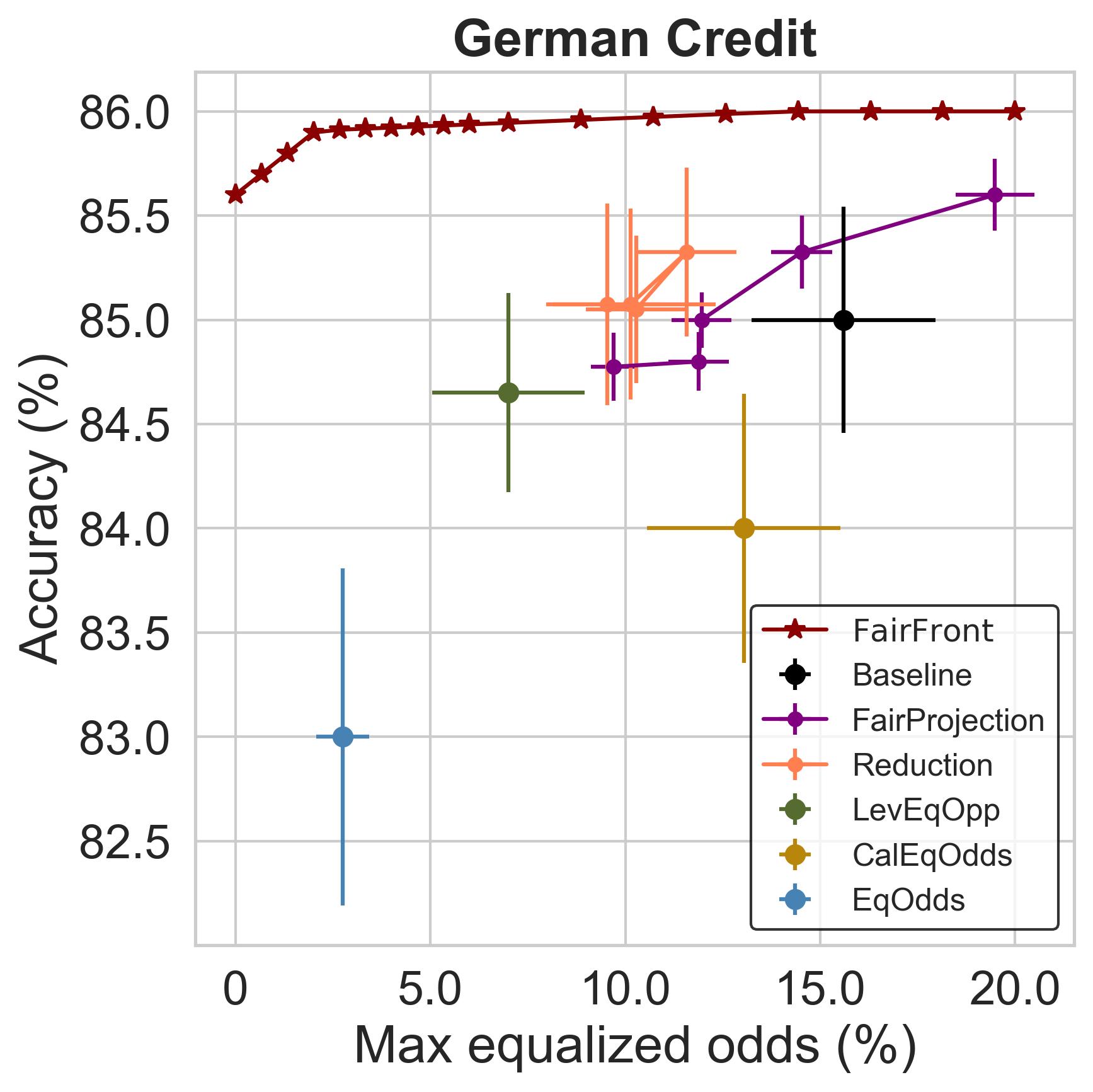}
\includegraphics[width=0.40\linewidth]{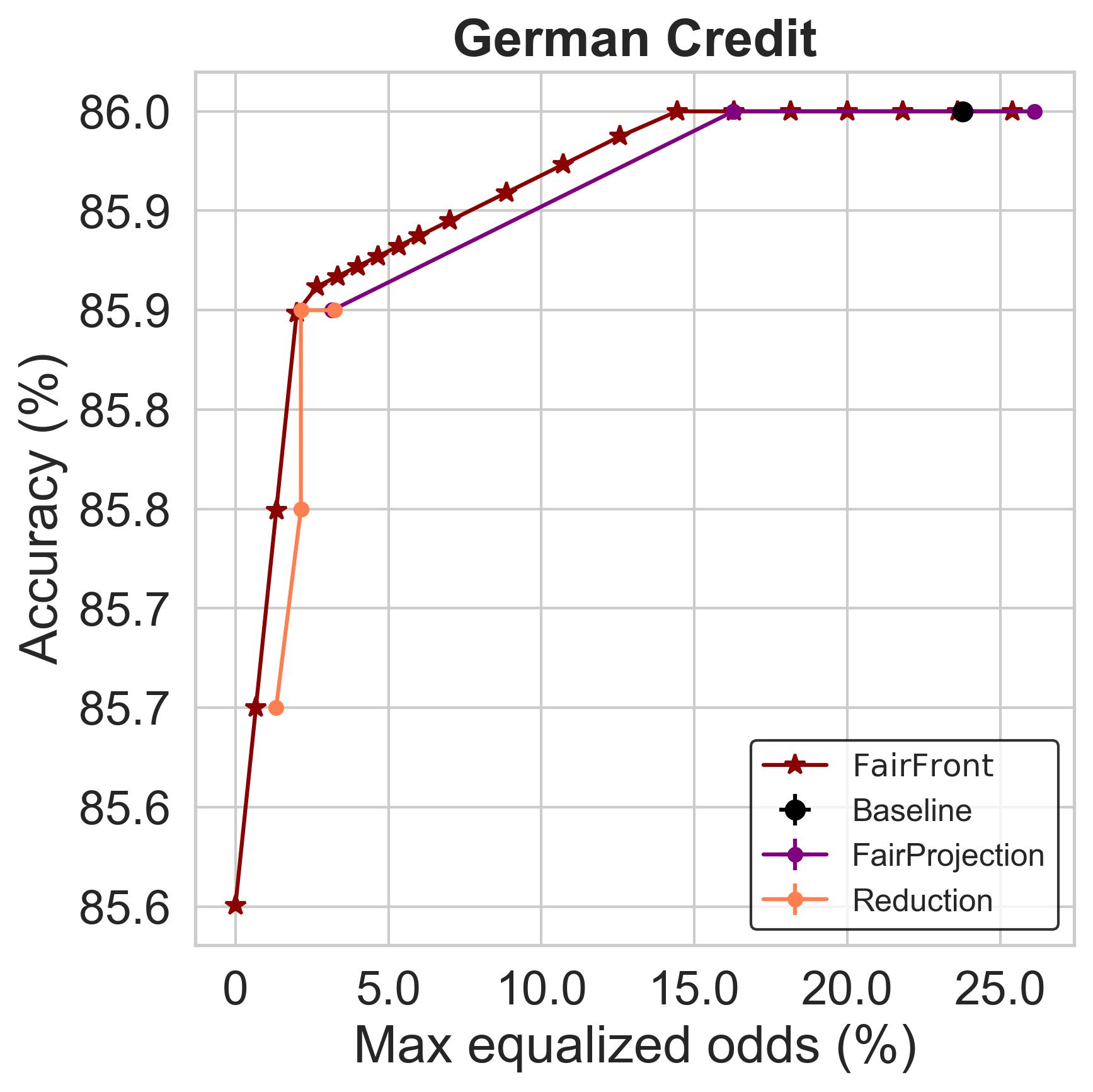}
\caption{We reproduce our experiments on the German Credit dataset. Our observation is consistent with those on the previous two datasets---the fairness-accuracy curves given by SOTA fairness interventions, such as \texttt{Reduction} and \texttt{FairProjection}, are close to the information-theoretic limit.
}
\label{Fig::German}
\end{figure*}

\begin{figure*}[t]
\centering
\includegraphics[width=0.40\linewidth]{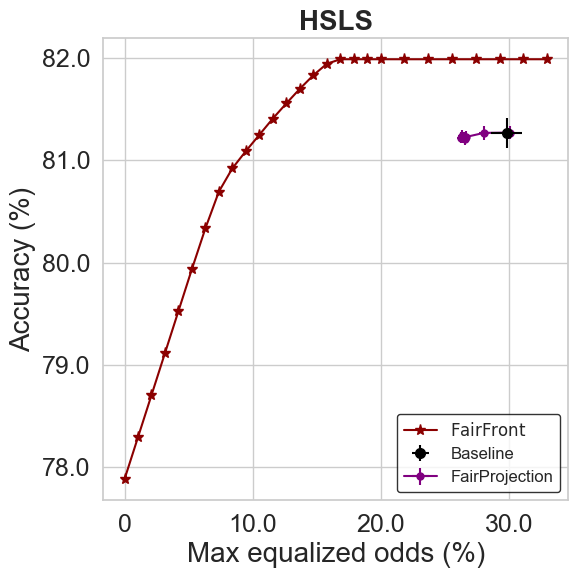}
\includegraphics[width=0.40\linewidth]{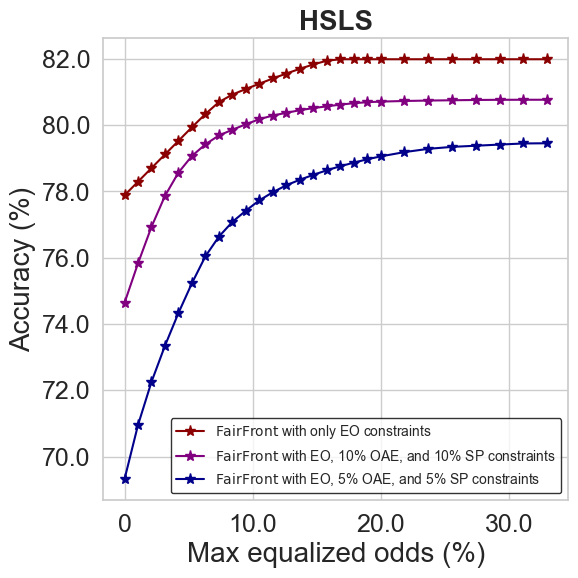}
\caption{We reproduce our experiments on the HSLS dataset with multi-group and multi-label pre-processing. On the right, we also demonstrate that \texttt{FairFront} can take into account multiple fairness considerations at once. We show how the fairness-accuracy curve changes as we add new types of group fairness constraints (i.e., adding OAE and SP constraints in addition to EO).
}
\label{Fig::HSLS}
\end{figure*}

In this section, we present additional experimental results to further support our findings. We reproduce our experimental results on the German Credit dataset \citep{bache2013uci} and HSLS (High School Longitudinal Study) dataset \citep{ingels2011high,jeong2022fairness} in Figure~\ref{Fig::German} and Figure~\ref{Fig::HSLS}. In particular, the HSLS dataset experiment is a multi-group, multi-label experiment. Our observation is consistent with those on the previous two datasets---the fairness-accuracy curves given by SOTA fairness interventions, such as \texttt{Reduction} and \texttt{FairProjection}, are close to the information-theoretic limit.

\subsection{Dataset}

\paragraph{Adult.} 
We use sex (female or male) as the group attribute and income ($>50\text{K}$ or $<=50\text{K}$) as the target for prediction. We use sex, hours-per-week, education-num, age, marital status, relationship status (husband or wife) as the input features---we include the group attribute as an input feature. We group age into a total of 12 disjoint intervals: $[0, 20)$, $[20,25), \cdots, [65,70), [70,\infty)$; we group hours-per-week into a total of 14 disjoint intervals: $[0,10), [10,15),\cdots,[65,70), [70,\infty)$.

\paragraph{COMPAS.}
We use race (African-American or Caucasian) as the group attribute and is\_recid (recid. or no recid.) as the target for prediction. We use race, age, c\_charge\_degree, sex, priors\_count, c\_jail\_in, c\_jail\_out as the input features---we include the group attribute as an input feature. We use the last two features by taking their difference to be their length\_of\_stay. We remove entries where COMPAS case could not be found (is\_recid = -1) and entries with inconsistent arrest information. We also binarize sex and remove traffic offenses. We quantize age the same way we do in the Adult dataset and quantize length\_of\_stay by every 30 days and let 0 be a separate category.

\paragraph{German Credit.}
We use age (below or above 25 years old) as the group attribute and the credit column, which represents whether the loan was a good decision, as the target for prediction. We use loan duration in month, credit amount, age, number of existing credits at this bank, sex, credit history, savings, and length of present employment as input features.
We include the group attribute age as an input feature. We group credit amount into three disjoint intervals: [0, 5000), [5000, 10000), [10000,$\infty$). We group duration of loan into two categories: under 36 months and over 36 months.

\paragraph{HSLS.}
We use race as the group attribute and mathematics test score (number of questions answered correctly out of 72) as the target for prediction. This is a multi-group and multi-label dataset. The entire population is grouped by 4 categories: White, Asian, African American, and Others. We seek to predict the mathematics test performance from a set of attributes, including the scale of student's mathematical identity, scale of student's mathematics utility, scale of one's mathematics self-efficacy, parent's education, parent's income, scale of student's sense of school belonging, race, and sex. Note that we include the group attribute as an input feature. We group the target column (estimated number of questions answered correctly) into a total of 5 disjoint intervals: $[0, 30)$, $[30,40), [40,50),[50,60),[60,\infty)$; we group the scale of student's mathematical identity, mathematics utility, mathematics self-efficacy, and sense of school belonging into a total of 4 disjoint intervals, characterized by standard deviations away from the mean: $(-\infty,-1), [-1,0),[0,1),[1,\infty)$.

\subsection{Benchmark}

Each benchmark method's hyper-parameter values are provided below. Each point in Figure~\ref{Fig::Fair_Frontier} for \texttt{Baseline}, \texttt{EqOdds}, \texttt{CalEqOdds}, \texttt{Reduction}, \texttt{LevEqOpp}, and \texttt{FairProjection} is obtained by applying the obtained classifier to 10 different test sets. For the Adult dataset, we use Random Forest with n\_estimators=15, min\_samples\_leaf=3, criterion = log\_loss, bootstrap = False as our baseline classifier; for the COMPAS dataset, we use Random Forest with n\_estimators = 17 as our baseline classifier. For the German Credit dataset, we use Random Forest with n\_estimators=100,min\_samples\_split =2,min\_samples\_leaf=1 as our baseline classifier. They are all implemented by Scikit-learn \citep{scikitlearn}.


\paragraph{EqOdds \citep{hardt2016equality}.} We use AIF360 implementation of \texttt{EqOddsPostprocessing} and the default hyper-parameter setup.

\paragraph{CalEqOdds \citep{pleiss2017fairness}.} We use AIF360 implementation of \texttt{CalibratedEqOddsPostprocessing} and the default hyper-parameter setup.

\paragraph{Reduction \citep{agarwal2018reductions}.} We use AIF360 implementation of \texttt{ExponentiatedGradientReduction}. We vary the allowed fairness constraint violation $\epsilon \in \{ 0.001, 0.01, 0.2, 0.5, 1, 2, 5, 10, 15\}$ for Adult dataset and $\epsilon \in \{0.001, 0.01, 0.2, 0.5, 1, 2, 5, 10, 15\}$ for Adult with missing values. We vary $\epsilon \in \{ 0.001, 2, 5, 10, 15, 20, 25, 30, 35, 40\}$ for COMPAS to obtain a fairness-accuracy curve, and $\epsilon \in \{0.001, 0.1, 0.5, 1, 2, 7, 8, 10, 15, 20, 25, 30\}$ for COMPAS with 50\% missing values in the minority group. We use $\epsilon\in \{20, 50, 80, 95\}$ for German Credit dataset and $\epsilon\in \{5,8,10,20,23\}$ when using Bayes Optimal classifier.

\paragraph{LevEqOpp \citep{chzhen2019leveraging}.} We use the Python implementation of \texttt{LevEqopp} from the Github repo in \citet{alghamdi2022beyond}. We follow the same hyperparameters setup as in the original method.

\paragraph{FairProjection \cite{alghamdi2022beyond}.} We use the implementation from the Github repo in \citet{alghamdi2022beyond} and set use\_protected = True. We use Random Forest with n\_estimators = 17 as the baseline classifier to predict $\rS$ from $(\rX,\rY)$. We set the list of fairness violation tolerance to be $\{0.07, 0.075, 0.08, 0.085, 0.09, 0.095, 0.1, 0.5, 0.75, 1.0 \}$ for Adult dataset and $\{0.02, 0.03, 0.04, 0.05, 0.06, 0.07, 0.08, 0.1, 0.5, 1.0\}$ for COMPAS dataset to obtain a fairness-accuracy curve. We set the list of fairness violation tolerance to be $\{0.005, 0.01, 0.02, 0.07, 0.1, 0.15\}$ on the German Credit dataset experiment, and $\{0.0001, 0.001, 0.005, 0.01, 0.015, 0.02, 0.05\}$ when using a Bayes optimal baseline classifier. For the HSLS dataset, the list of tolerance is $\{0.06, 0.065, 0.07, 0.08, 0.1, 0.09, 0.15, 0.2, 0.3\}$.

\section{More on Related Work}

We provide a detailed comparison with existing work on fairness Pareto frontier and ML uncertainty in this section.

\subsection{Fairness Pareto Frontier}

We present in Table~\ref{tabel:comparison} a detailed comparison of our approach with previous studies that have investigated the fairness Pareto frontier and fair Bayes optimal classifier. In short, our approach is different from this line of research as it simultaneously combines several important aspects: it is applicable to \emph{multiclass} classification problems with \emph{multiple} protected groups; it \emph{avoids disparate treatment} by not requiring the classifier to use group attributes as an input variable; and it can handle \emph{multiple} fairness constraints simultaneously and produce fairness-accuracy trade-off curves (instead of a single point).

\begin{table*}[t]
\small
\centering
\resizebox{0.77\textwidth}{!}{
\renewcommand{\arraystretch}{1.25}
\begin{tabular}{lccccc}

\toprule

&

Multiclass

& 

Multigroup

&

\sccell{c}{
Avoid\\
disparate treatment
}

&

Multi-constraint

&

Curve

\\
\toprule

\citet{hardt2016equality}

&

\XSolidBold

&

\CheckmarkBold

&

\XSolidBold

&

\XSolidBold

&

\XSolidBold

\\ \midrule 

\citet{corbett2017algorithmic}

&

\XSolidBold

&

\CheckmarkBold

&

\XSolidBold

&

\XSolidBold

&

\XSolidBold

\\ \midrule

\citet{menon2018cost}

&

\XSolidBold

&

\XSolidBold

&

\CheckmarkBold

&

\XSolidBold

&

\CheckmarkBold

\\ \midrule 

\citet{chzhen2019leveraging}

&

\XSolidBold

&

\XSolidBold

&
\CheckmarkBold

&

\XSolidBold

&

\XSolidBold

\\ \midrule

\citet{yang2020fairness}

&

\CheckmarkBold

&

\CheckmarkBold

&

\XSolidBold

&

\CheckmarkBold

&

\CheckmarkBold

\\ \midrule

\citet{zeng2022bayes}

&

\XSolidBold

&
\CheckmarkBold

&
\XSolidBold

&
\XSolidBold

&

\CheckmarkBold

\\ \midrule

\citet{zeng2022fair}

&

\XSolidBold

&

\CheckmarkBold

&

\XSolidBold

&

\XSolidBold

&

\XSolidBold

\\ \midrule

Our approach

&

\CheckmarkBold

&

\CheckmarkBold

&

\CheckmarkBold

&

\CheckmarkBold

&

\CheckmarkBold

\\

\bottomrule 
\end{tabular}
}
\caption{Comparison with existing work that investigate the fairness Pareto frontier. \textbf{Multiclass/multigroup}: can handle multiclass classification problems with multiple protected groups; \textbf{Avoid disparate treatment}: not require the classifier to use group attributes as an input variable; \textbf{Multi-constraint}: can handle multiple (group) fairness constraints simultaneously; \textbf{Curve}: produce fairness-accuracy trade-off curves (instead of a single point).}
\label{tabel:comparison}
\end{table*}

\subsection{Aleatoric and Epistemic Uncertainty}
\label{subsec::related_uncertainty}

In this paper, we divide algorithmic discrimination into aleatoric and epistemic discrimination. We borrow this notion from ML uncertainty literature \citep[see][for a survey]{hullermeier2021aleatoric}. Here we provide a detailed comparison between them.

In terms of their definitions, epistemic uncertainty arises from a lack of knowledge about the best model, such as the Bayes predictor, while epistemic discrimination results from a lack of knowledge about the optimal ``fair'' predictive model. On the other hand, aleatoric uncertainty is the irreducible part of uncertainty caused by the random relationship between input features and label, while aleatoric discrimination is due to inherent biases in the data-generating distribution.

In terms of their characterization, epistemic uncertainty can in principle be reduced by including additional information (e.g., more data); epistemic discrimination can be reduced in a similar approach since a data scientist can choose a more effective fairness-intervention algorithm with access to more information.

Finally, in the infinite sample regime, a consistent learner will be able to remove all epistemic uncertainty, assuming the model class is large enough and there are no computational constraints. Analogously, we demonstrate in Figure~\ref{Fig::BayesOptimal} that when the underlying distribution is known, SOTA fairness interventions are able to eliminate epistemic discrimination as their fairness-accuracy curves are close to the fair front.

\section{More on Future Work}
\label{append:future}

In this paper, we present an upper bound estimate for $\mathsf{FairFront}$ in Algorithm~\ref{alg:FATO}. It is important to note that this estimate may be subjected to errors originating from various sources. These include (i) the approximation error of the function $g$, (ii) estimation errors from computing the expectation in \eqref{eq::chara_C} with a finite dataset, and (iii) the influence of hyperparameters, $T$ (number of running iterations of Algorithm~\ref{alg:FATO}) and $k$ (number of segments in the piece-wise linear functions). Regarding the dependence on $T$, our Theorem~\ref{thm::convergence} ensures the algorithm's asymptotic convergence as $T \to \infty$. However, we have not established a proof for its behavior at a finite $T$. Regarding the dependence on $k$, we conjecture that $k = A * C$ should suffice, where $A$ is the number of protected groups and $C$ is the number of labels. While Blackwell proved this result for $k=2$ in Theorem 10 of \citet{blackwell1953equivalent}, an extension of this proof to a general value of $k$ appears to remain an open problem.

We define aleatoric and epistemic discrimination with respect to the entire population. Investigating their per-instance counterparts and the relationship to individual fairness would be a compelling area of future research. Additionally, a more nuanced analysis of aleatoric and epistemic discrimination is desirable, further breaking them down into fine-grained components. For instance, epistemic discrimination may be attributed to various factors including limited training data, noisy observations of labels or sensitive attributes, and limitations of learning algorithms. Finally, investigating other criteria, such as scalability, generalization, and robustness in evaluating existing fairness interventions is a significant topic for future exploration.

\end{document}